\pgfplotsset{compat=1.18}
\DeclareMathOperator{\R}{\mathbb{R}} 
\newcommand{\N}{\mathbb{N}} 
\newcommand{\cA}{{A}} 
\newcommand{\Fib}{\text{Fib}}
\colorlet{myblue}{cyan!40}
\definecolor{codegreen}{rgb}{0,0.6,0}
\newcommand{\myNum}[1]{(\emph{#1})}
\newcommand{\smartparagraph}[1]{\vspace{2pt} \noindent {\bf #1}}
\newtheorem{theorem}{Theorem}
\newtheorem{lemma}[theorem]{Lemma}
\newcommand{\wmax}{w_{\text{max}}}
\newcommand{\wmin}{w_{\text{min}}}
\newcommand{\modelname}{Fibottention\xspace}
\newcommand{\eg}{\emph{e.g.}\xspace}
\newcommand{\ie}{\emph{i.e.}\xspace}
\DeclareRobustCommand{\!}{%
  \ifmmode\mskip-\thinmuskip\else\hspace{-0.1667em}\fi
}
\newcolumntype{C}{>{\centering\arraybackslash}X} 
\DeclareFontShape{OT1}{ptm}{m}{scit}{<-> ssub * ptm/m/it}{}
\begin{document}
\bstctlcite{IEEEexample:BSTcontrol}

\title{Fibottention: Inceptive Visual Representation Learning with Diverse Attention Across Heads}

\author{%
\textbf{Ali K. Rahimian}$^{1}$ \quad \textbf{Manish K. Govind}$^{1}$ \quad \textbf{Subhajit Maity}$^2$ \quad \textbf{Dominick Reilly}$^1$ \\
\textbf{Christian Kümmerle}$^{2\dagger}$ \quad
\textbf{Srijan Das}$^{1\dagger}$ \quad \textbf{Aritra Dutta}$^{2\dagger}$ \\
$^1$University of North Carolina at Charlotte \quad $^2$University of Central Florida \\
\texttt{\{akhalegh, sdas24\}@charlotte.edu} \quad \texttt{ \{kuemmerle, aritra.dutta\}@ucf.edu}


\thanks{$\dagger$ Equal contribution as Project Lead. \par}
}
    



\maketitle

\begin{abstract}
Vision Transformers and their variants have achieved remarkable success in diverse visual perception tasks. Despite their effectiveness, they suffer from two significant limitations. First, the quadratic computational complexity of multi-head self-attention (MHSA), which restricts scalability to large token counts, and second, a high dependency on large-scale training data to attain competitive performance.
In this paper, to address these challenges, we propose a novel sparse self-attention mechanism named \emph{\modelname}. \modelname employs structured sparsity patterns derived from the Wythoff array, enabling an $\mathcal{O}(N \log N)$ computational complexity in self-attention. By design, its sparsity patterns vary across attention heads, which provably reduces redundant pairwise interactions while ensuring sufficient and diverse coverage. This leads to an \emph{inception-like functional diversity} in the attention heads, and promotes more informative and disentangled representations. We integrate \modelname into standard Transformer architectures and conduct extensive experiments across multiple domains, including image classification, video understanding, and robot learning. Results demonstrate that models equipped with \modelname either significantly outperform or achieve on-par performance with their dense MHSA counterparts, while leveraging only $2\%$ of all pairwise interactions across self-attention heads in typical settings, 
resulting in substantial computational savings. Moreover, when compared to existing sparse attention mechanisms, \modelname consistently achieves superior results on a FLOP-equivalency basis. Finally, we provide an in-depth analysis of the enhanced feature diversity resulting from our attention design and discuss its implications for efficient representation learning. Reproducible code is available at \url{https://github.com/Charlotte-CharMLab/Fibottention}.
\end{abstract} 

\begin{IEEEkeywords}
Sparse attention, representation learning, Transformers, visual encoding, head diversity.
\end{IEEEkeywords}


\section{Introduction}
\label{sec:introduction}
\providecommand{\eg}{\emph{e.g.}\xspace}
\providecommand{\ie}{\emph{i.e.}\xspace}
\providecommand{\modelname}{\textsc{Fibottention}\xspace}

Transformer-based architectures, such as large foundation models, \eg\!\!, GPT~\cite{gpt,gpt3}, BERT~\cite{BERTnlp}, ALBERT~\cite{Lan2020ALBERT}, ViT~\cite{dosovitskiy2020vit}, DETR~\cite{DETR}, D-DETR~\cite{D-DETR}, CLIP~\cite{clip_representation}, have achieved dominating performance in many downstream tasks such as object detection and tracking \cite{dutta2024multiview}, document summarization \cite{gpt}, language modeling \cite{BERTnlp}, and video understanding \cite{reilly2023just}. Compared to other deep neural networks, such as convolutional neural networks, Transformers excel when trained on large-scale datasets with extensive model parameters \cite{dosovitskiy2020vit}. However, their performance typically degrades in low-data regimes~\cite{dosovitskiy2020vit}. 
The growing demand for models that can be deployed and trained effectively on diverse edge devices or within the Internet of Things (IoT) \cite{agarwalla2024enabling, reidy2023efficient, sun2024towards, tuli2023edgetran, qu2022dota} has driven significant research interest in developing efficient models 
that are both compute-efficient and data-efficient. 

At the core of Transformer-based models is the multi-head self-attention (MHSA) \cite{Kim2016structured, attention,dosovitskiy2020vit} mechanism. In MHSA, $N$ input feature vectors (a.k.a. tokens) of dimension $d$ are mapped to $h$ query, key and value matrices $Q_i, K_i, V_i \in \R^{N \times d_h}$, for each head $i=1,\ldots, h$, which are subsequently mapped to output feature vectors. Its computational bottleneck is the computation of the $h$ attention matrices $\cA_i = Q_i K_i^{\top} / \sqrt{d_h} \in  \R^{N \times N}$ consisting of inner products of queries and keys, which inherently limits the number of tokens due to the resulting $O(N^2)$ complexity. 
To mitigate this limitation, a considerable body of literature considers Transformer variants that evaluate the attention matrices only at a \emph{sparse subset }
of their entries $\Omega\subset [N] \times [N]$ of size $s = |\Omega| < N^2$, thereby lowering the incurred time complexity from $O(N^2)$ to $O(s)$. It has been observed that, while reducing the computational load, such \emph{sparse attention} patterns can degrade the modeling capacity of the underlying architecture and often reduce their accuracy relative to dense attention on several benchmarks, highlighting a trade-off between efficiency and performance \cite{SparseTransformer19,yun2020n, Beltagy2020LongformerTL, zaheer2020big, starformer}. Since MHSA is designed to facilitate interactions between all tokens, finding suitable choices for such a global support set $\Omega$ with a favorable trade-off between efficacy and efficiency is challenging due to the data, model, and instance dependence of the attention values.

Popular sparse attention strategies include local attention with sliding windows of fixed window size \cite{Wang-2019MultiBERT,Beltagy2020LongformerTL,ramachandran2019stand}, in which only interactions between spatially proximal tokens are considered, often augmented by random attention mechanisms \cite{zaheer2020big,zhang2023vision}.
Interestingly, the majority of sparse MHSA mechanisms are designed in the context of natural language processing, with the notable exception in visual domains for high-resolution images \cite{Esser-2021taming,zhang2021multi}. Unlike natural language, images and videos exhibit strong spatial and spatiotemporal redundancy: neighboring pixels often encode similar content, and many key–query interactions in dense MHSA are unnecessary for visual representation learning. This redundancy means uniform, all-to-all attention spends substantial compute on low-utility interactions. 

In either domain, sparse attention has been mostly considered as a modification of MHSA that \emph{harms} model performance while improving computational feasibility. In this work, however, we distill insights from a substantial body of literature on sparse attention mechanisms to design a versatile, deterministic sparse attention pattern that is able to improve the model performance of Transformers in a regime of limited visual data, while simultaneously improving their computational efficiency across dataset sizes. 

In particular, we propose \emph{\modelname}, an MHSA variant that can be used as a drop-in replacement for full attention in vision Transformers
\cite{dosovitskiy2020vit, cvt, timesformer, mvitv2}.
For each attention head $A_i$, \modelname{} deploys a complementary sparsity
pattern $\Omega_i$ based on \emph{non-overlapping, dilated sliding windows}
\cite{Beltagy2020LongformerTL} with a \emph{growing dilation} schedule to
capture both local and global token interactions within each head.
Concretely, we implement this using \emph{non-overlapping generalized Fibonacci
sequences} as dilation sequences $(f_n)_n$ across heads (see Table \ref{tab:mod-wythoff-array}, Fig. \ref{fig:arch}, and Eq. $\eqref{def:Omegai:Fibottention}$).
Fibonacci-like schedules progressively thin out connections as distance grows,
encoding dense local interactions and increasingly sparse long-range links. This structured sparsity is motivated not only by compute efficiency, but also
by an inductive bias that promotes multi-scale feature aggregation and head-wise
complementarity.
The design choices of \modelname~are based on the following key insights: \myNum{i} \textit{the principal diagonal} of an attention matrix $A_i$ might not contain helpful information for the model~\cite{shi2021sparsebert}; that \myNum{ii} a structured, deterministic sparsity pattern $\Omega$, which is able to capture both local and global token interactions is desirable from a modeling \cite{zaheer2020big, starformer} and efficiency perspective; \myNum{iii} sparsity patterns, $\Omega_1,\ldots,\Omega_h$ that \textit{differ across} the attention heads, $A_i$ can achieve a diversity of feature representations across heads (analogous to the intuition behind CNN-based \emph{Inception} models \cite{szegedy2016inception}); and finally, that \myNum{iv} a \emph{low overlap} between the $\Omega_i$ is desirable as it has the potential to \emph{maximize} the diversity of the resulting feature representations while \emph{minimizing} the total number $\sum_{i=1}^h |\Omega_i|$ of inner products to be calculated. Consequently, the \modelname's structured attention increases the feature diversity across heads and exhibits a performance-enhancing inductive bias that is particularly beneficial for visual domains with {\em limited training data} (e.g., video understanding and robotics). 


We extensively evaluate \modelname~in conjunction with diverse state-of-the-art Transformer architectures catered towards visual representation tasks, including image classification, video action recognition, and robot imitation learning (\S\ref{sec:numerical}).
On CIFAR-10/100~\cite{cifar}, it consistently surpasses Transformer baselines trained with full multi-head self-attention (MHSA) while remaining on par when trained on ImageNet-1K~\cite{imagenet} (Table~\ref{tab:sota}).
As we show in Section \ref{sec:experimental:results}, \modelname's behavior is consistent across various Transformer architectures (Table~\ref{tab:backbones}): when incorporated into UPop~\cite{shi2023upop}, iFormer~\cite{zheng2025iformer_iclr}, Swin-B~\cite{liu2021swin}, and ConViT-B~\cite{d2021convit}, it consistently reduces attention compute to a small fraction of the baseline while preserving or improving accuracy depending on capacity and training regime, a feat that other sparse attention designs such as top-$K$ attention \cite{gupta2021memoryefficient} or Longformer \cite{Beltagy2020LongformerTL} fail to achieve. 
In temporal domains, we see that \modelname improves the Top-1 accuracy of TimeSformer~\cite{timesformer} on SmartHome~\cite{smarthome} and NUCLA~\cite{nucla} (Table~\ref{tab:video}). For robot imitation learning tasks (Lift/Can/PushT), it achieves the best average task completion rates (Table~\ref{tab:robotics}). Overall, we observe that \modelname{} can achieve large, quantifiable reductions in attention FLOPs while maintaining or enhancing predictive performance across scales, modalities, and backbones.


The remainder of this paper is organized as follows. Section \ref{sec:related_work} reviews related work on sparse, adaptive, and diverse attention for vision Transformers. Section \ref{sec:method} details the design of \modelname based on Fibonacci dilation sequences defined from the Wythoff array and its head-wise masking strategy. Section \ref{sec:numerical} presents empirical evaluations of vision Transformers modified by \modelname and other sparse attention variants on image classification tasks. Section \ref{sec:fibottention:othervisual} extends \modelname to other visual domains, including video action recognition and robot imitation learning, describing training protocols and empirical findings, and Section \ref{sec:further-ablation} provides ablations, studies of the impact of \modelname on head diversity and inductive bias. Section \ref{sec:conclusion} concludes with limitations and avenues for future work. Finally, we provide a proof of \modelname's $O(N \log N)$ time complexity in Appendix A, 
implementation and hyperparameter details in Appendices B 
and C,  
and further ablation studies in Appendices D 
and E.
\section{Related Work}
\label{sec:related_work}

\smartparagraph{Vision Transformers.}
Derived from Transformers~\cite{attention}, which excel on long-range sequence tasks in NLP, the vision Transformer (ViT)~\cite{dosovitskiy2020vit} splits images into small patches, each corresponding to a token, and has emerged as a popular architecture in visual understanding tasks.
While ViTs~\cite{dosovitskiy2020vit,deit} outperform CNN-based models in a variety of visual representation tasks, they require extensive training data to achieve this superior performance and exhibit a quadratic time complexity in the number of tokens $N$. DeiT models~\cite{deit,wu2022tinyvit} exhibit advantages over ViT in the presence of limited training data due to their knowledge distillation, but still require a quadratic time complexity with respect to $N$.
Another line of research, which includes CvT~\cite{cvt} and ConViT~\cite{d2021convit} models, has achieved strong performance based on hybridization with convolutions. Mobile-oriented hybrids such as iFormer~\cite{zheng2025iformer_iclr} push this direction toward latency-constrained regimes by coupling ConvNeXt-style local processing with lightweight modulation attention, achieving favorable accuracy--latency tradeoffs on smartphone-class hardware. A useful inductive bias is also provided by hierarchical models such as MViTv2 \cite{mvitv2} or Swin Transformer~\cite{liu2021swin}, 
which combine a patch merging strategy with MHSA mechanisms adapted to blocks of tokens. While all these models are compatible with the MHSA modification provided by \modelname, we focus in our experiments on models without knowledge distillation for a more controlled experimental setup.

\smartparagraph{Sparse Attention.}
%
Despite their advancements, ViT variants share a common limitation: the MHSA mechanism inherently requires the evaluation of $O(N^2)$ token interactions, posing a significant computational challenge. However, theoretical insights \cite{yun2020n} demonstrate that sparse attention, given an appropriate sparsity pattern, can effectively approximate any sequence-to-sequence function, mirroring the capabilities of full attention \cite{yun2019transformers}. Several studies, primarily in NLP, have explored optimal sparsity patterns, emphasizing the importance of central diagonal elements in $\Omega_i$ \cite{clark2019does, kovaleva2019revealing}. Longformer \cite{Beltagy2020LongformerTL}, BigBird \cite{zaheer2020big}, and Star-Transformer \cite{guo2019star} incorporate global and local token interactions to enhance performance \cite{kovaleva2019revealing, li2019enhancing}. Regional token interactions are commonly implemented as a diagonal sliding window within $\Omega$, which can be expanded through dilated sliding window attention to increase the receptive field without additional computational overhead \cite{Beltagy2020LongformerTL, hassani2022dilated}. Longformer further extends this by incorporating random token pair interactions, while sparse Transformers \cite{SparseTransformer19} introduce specialized sparse patterns designed for efficient long-sequence generation. Collectively, these methods leverage combinations of local, global, sliding window, dilated sliding window, and random attention patterns. Later, Shi et al.\ \cite{shi2021sparsebert} observed that the main diagonal elements in $\Omega_i$ are redundant, proposing a learnable differentiable attention mask to refine the sparsity structure.
Overall, compared to NLP, structured sparse attention is less explored in vision problems, in which token interactions tend to be more redundant and contextually distinct from the language domain, and have been mostly investigated in the context of high-resolution setups \cite{Esser-2021taming,zhang2021multi,li2025radial}, e.g., via the hierarchical neighborhood attention Transformer \cite{hassani2023neighborhood}; while in these works, computational efficiency is the main focus of the sparse attention design, we jointly optimize for inductive bias in \modelname. 
In Section \ref{sec:experimental:results}, we use a selection of the above-mentioned sparse attention modifications (adapted to the vision setting) as baselines in empirical comparisons to the adaptation of \modelname into Vision Transformers. Beyond sparse attention, other full attention approximations have been studied, with similar potential for breaking its quadratic complexity bottleneck, such as efficient attention \cite{shen2021efficient} and Linformer \cite{wang2020linformer}, which apply non-linear transformations of key and query matrices instead of softmax and low-rank approximations of keys and values, respectively.

\smartparagraph{Adaptive and Diverse Attention.}
While many sparse attention designs fix attention patterns across batches, several works explore adaptive or instance-dependent sparsity. For example, \cite{kitaev2020reformer,roy2021efficient,Wei2023-Sparsifiner} have proposed learned instance-dependent attention masks, which can be effective but impose additional model complexity dedicated to the learning of the sparsity mask. A related line of work studies variants of \emph{top-$k$ attention} \cite{Zhao2019explicit,gupta2021memoryefficient,Sander2023fast,you2025spark}, where each query attends dynamically only to the $k$ most relevant keys, resulting in instance-dependent attention computations. Recently, top-$k$ sparse attention in state-of-the-art large-scale language models \cite{liu2025deepseek} coupled with a lightweight indexer module has enabled efficient Transformer architectures applicable for long contexts. \modelname, unlike learned sparse or top-$k$ sparse attention mechanisms, is non-adaptive and does not require a conceptual and computational overhead due to its fixed sparsity patterns.
A limited number of works report observations of improved empirical performance of Transformers using attention patterns that \emph{vary across heads}; examples of such works are Longformer~\cite{Beltagy2020LongformerTL}, which reports improved performance when combining sparse heads with and without dilation in their architectures, and \cite{SparseTransformer19}, which provides evidence that differently sized sub-blocks across heads are preferable. However, we are not aware of works on fixed sparse attention patterns that systematically incorporate these observations into their mechanism design, especially in vision settings \cite{zhang2021multi}, as \modelname does with its complementary diverse sparse design. With adaptive designs, however, the benefits of diverse attention have recently been studied in language models for inference \cite{Fu2025mixture} or through the introduction of gating \cite{Qiu2025gated}.

\smartparagraph{Dynamic Token Sparsification.}
Several methods have explored dynamic token reduction strategies to further mitigate the quadratic complexity of ViTs. DynamicViT~\cite{rao2021dynamicvit} introduces lightweight prediction modules that estimate the importance of each token at intermediate layers, progressively discarding less informative ones while maintaining the most relevant content for recognition. 
PS-ViT~\cite{yue2021psvit} adopts a progressive sampling strategy: at each iteration, the model predicts new sampling offsets to refine where tokens should be drawn, enabling it to concentrate on discriminative image regions. EViT~\cite{liang2022evit} reorganizes tokens by identifying attentive and inattentive ones through class-token attention, retaining the former while merging the latter into compact representations for subsequent layers. Beyond vision-only settings, UPop~\cite{shi2023upop} extends progressive pruning to vision-language Transformers. While these methods dynamically adapt the token set during inference, \modelname sparsifies only token \emph{interactions}, not tokens. Thus, \modelname's design is orthogonal to dynamic token reduction approaches and can be combined with token sparsification; in combination with those ideas, \modelname has the potential to further improve runtime efficiency and inductive biases of token-sparsified Transformer models.

\section{Method}
\label{sec:method}

\begin{figure*}[t]
  \centering
  \includegraphics[width=0.8\textwidth]{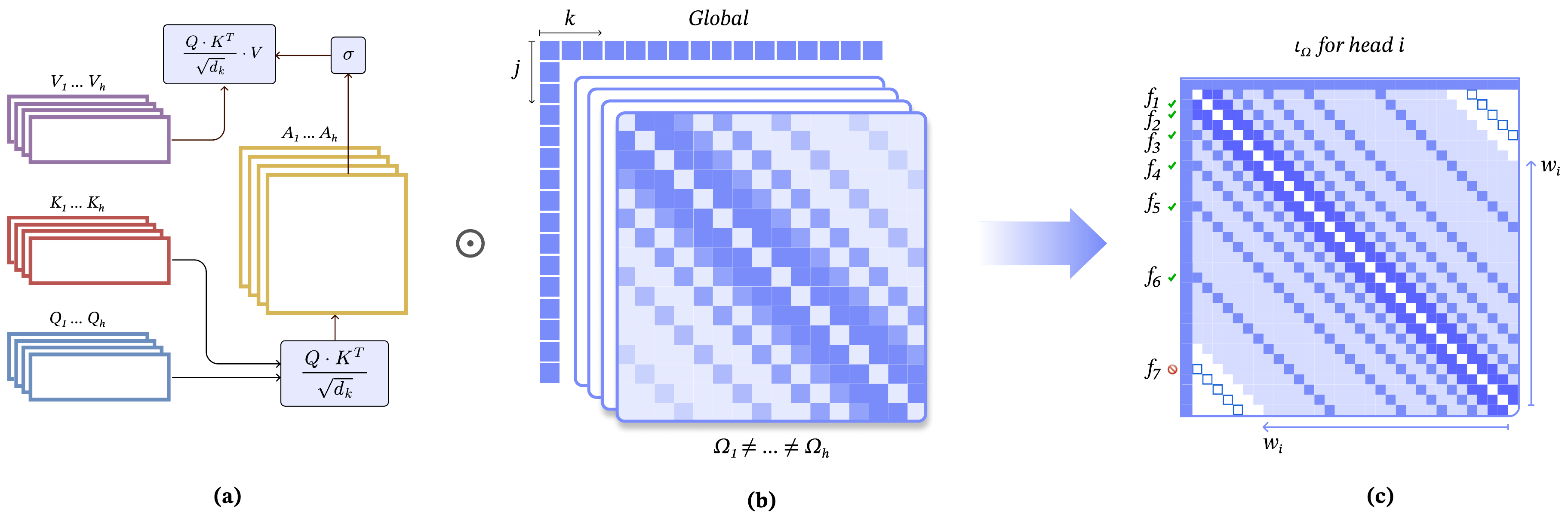}
  \caption{{(a) The MHSA. (b) A general sparse attention computation strategy. A sequence of sparse support sets, $\{\Omega_i\}_{i=1}^h$, where each set selects $|\Omega_i|<N^2$ entries of the attention matrix. (c) The generalized masking strategy of \modelname~that controls sparsity of each attention matrix $\cA_{i}$ through a dilated sequence, $(f_n)_n\subset \mathbb{N}$, and a fixed window size, $w$ for each head. Elements on \(f_1\) and \(f_2\) occur exclusively in the Modified Wythoff variant.}}
  \label{fig:arch}
\end{figure*}

In this section, we discuss the multi-head self-attention modification dubbed \modelname, which uses structured, diverse support sets derived from Fibonacci–Wythoff sequences in its sparse attention evaluation in each head. We start with a general sparse attention framework that allows formulating and comparing different sparsity patterns for attention heads $\{\cA_i\}_{i=1}^h$.

\subsection{Sparse Attention with Windowed Dilation}
\label{sec:general}

Instead of observing the attention matrix $A_i$ at each of its $N^2$ entries, sparse attention mechanisms compute only the dot products whose indices are supported on a subset $\Omega \subset \{1,2,\ldots, N\}^2$, i.e., we can define the sparse attention matrix $\cA_i^\Omega \in \mathbb{R}^{N \times N}$ of the $i$-th head corresponding to mask $\Omega$ as
\begin{equation} \label{eq:Omegamasking}
(\cA_i^{\Omega})_{j,k} = \begin{cases} \frac{Q_{i}^{(j)\top} K_{i}^{(k)}}{\sqrt{d_h}}, & \text{ if } (j,k) \in \Omega, \\
-\infty, & \text{ if }(j,k) \notin \Omega;
\end{cases}
\end{equation}
for any $j,k \in [N]$, where $Q_{i}^{(j)} \in \R^{d_h}$ and $ K_{i}^{(k)} \in \R^{d_h}$ are the $j$-th query vector and the $k$-th key vector of the $i$-th attention head, respectively. If $\odot$ denotes the entrywise matrix multiplication, also called Hadamard product, this can be written as $\cA_i^{\Omega} = \operatorname{sign}(\cA_i) \odot (|\cA_i| \odot \iota_{\Omega})$, where $\iota_{\Omega} \in \R^{N \times N}$ is an indicator matrix of the index set $\Omega$ that is $1$ for indices $(j,k) \in \Omega$ and $-\infty$ otherwise.  
{In this work, we study structured support sets that capture both local and global interactions while ensuring efficient inference and training through sparsity. To this end, we introduce the notion of a} \emph{dilation sequence}, $(f_n)_n \subset \N$, which determines the sequence of distances between indices of tokens that attend to each other and render diversity across heads. Furthermore, for a given attention head, we fix a \emph{window size}, $1 \leq w \leq N$, which, independently of the dilation sequence, provides an upper bound for the index distance between interacting token indices in the attention matrix.

Given the sequence, $(f_n)_n$ and parameter, $w$, we define the \emph{support set}, $\Omega_{w}^{(f_n)}\subset \{1,2,\ldots,N\}^2$ of \emph{interacting query-key pairs dilated by $(f_n)_n$ of window size $w$} such that
\[
\Omega_{w}^{(f_n)} = \big\{(j,k): |j-k| \in \{f_n\}_{n}, |j-k| \leq w \big\}.
\]
We refer to Figure \ref{fig:arch}(c) for a visualization of such support sets; $\Omega = \Omega_{w}^{\{f_n\}}$ represents the effective set of indices of query-key pairs for which we need to calculate the dot product in a given attention head $\cA_i$.

Several dilation sequences have been studied in both vision and language Transformer architectures. The majority of existing literature \cite{SparseTransformer19,Beltagy2020LongformerTL,zhang2021multi,hassani2022dilated} considers dilation sequences that are multiples of a fixed factor $c \in \N$, i.e., $(f_n)_n = (c n)_{n \in \N}$, corresponding to sliding windows with constant dilation factor $c$. While providing a certain level of efficiency, their attention complexity only reduces from $O(N w)$ to $O(N w/c)$, which is still of order $N^2$ if the window size $W$ is chosen to be $w=O(N)$. On the other hand, choosing a small window size $w = O(1)$ prevents the inclusion of any global interactions. Dilation patterns based on different dilation sequences have been less explored; \cite{li2019enhancing} studied exponentially dilated sequences giving rise to attention complexities of order $O(N \log w) = O(N \log N)$.

\begin{table}[ht]
\captionsetup[table]{position=top}
\setlength{\tabcolsep}{6pt}
\caption{Generalized Fibonacci sequences $\Fib(a_i^{\text{Wyt}},b_i^{\text{Wyt}})$ and $\Fib(a_i^{\text{Wyt-m}},b_i^{\text{Wyt-m}})$ drawn from the Wythoff array used by head $i$ in \modelname{} (default and modified variants). \label{tab:mod-wythoff-array}}
\centering
\begin{tabular}{
    c
    |
    c
    c
    :
    c
    c
    c
    c
    c
    c
}
\toprule
$i$ &
$\boldsymbol{a_i^{\text{Wyt-m}}}$ &
$\boldsymbol{b_i^{\text{Wyt-m}}}$ &
$\boldsymbol{a_i^{\text{Wyt}}}$ &
$\boldsymbol{b_i^{\text{Wyt}}}$ &
\multicolumn{4}{c}{} \\
\midrule
1 & 0 & 1 & 1 & 2 & 3 & 5 & 8  & $\dots$ \\
2 & 1 & 3 & 4 & 7 & 11 & 18 & 29 & $\dots$ \\
3 & 2 & 4 & 6 & 10 & 16 & 26 & 42 & $\dots$ \\
4 & 3 & 6 & 9 & 15 & 24 & 39 & 63 & $\dots$ \\
5 & 4 & 8 & 12 & 20 & 32 & 52 & 84 & $\dots$ \\
6 & \vdots & \vdots & \ldots & \ldots & \ldots & \ldots & \ldots & $\ldots$ \\
\bottomrule
\end{tabular}%
\end{table}

\subsection{Diverse Sparse Attention through Wythoff–Fibonacci Dilation Sequences}
\label{sec:fibottention}

While support sets derived from exponential dilation sequences lead to sparse attention matrices, it might happen that crucial query-key interactions are not captured by overly sparse patterns, deteriorating the quality of the resulting MHSA representations. At the same time, limited experimental results in \cite{Beltagy2020LongformerTL,kovaleva2019revealing,Ding-2023Longnet} indicate that varying support set patterns across attention heads can improve model performance. Furthermore, state-of-the-art sparse attention mechanisms aim for a delicate balance between covering local and global interactions \cite{Beltagy2020LongformerTL,zaheer2020big}, and do not necessarily include the interactions on the main diagonal of the attention matrix \cite{shi2021sparsebert}.

Motivated by these observations, we postulate that \emph{sparse attention matrices with diverse, well-designed support patterns across attention heads} are desirable and have the potential to lead to improved learned representations.

\smartparagraph{Fibonacci Dilation Sequences.} To achieve our design goals, we propose a sparse attention pattern that builds on \emph{(generalized) Fibonacci sequences} \cite{Sigler2003fibonacci,Koshy2019fibonacci}. The well-known Fibonacci sequence $(f_n)_{n \in \N}$ is defined as the sequence of integers $(0,1,1,2,3,5,8,13,\ldots)$ \cite{oeis-fibonacci} satisfying the linear recurrence relation
\setlength{\abovedisplayskip}{3pt}
\setlength{\belowdisplayskip}{3pt}
\begin{equation} \label{eq:recurrence}
f_{n+1} = f_n + f_{n-1},
\end{equation}
for each $n \geq 2$, where $f_1= 0$ and $f_2 =1$. Binet's formula \cite{Koshy2019fibonacci} states that the $n$-th Fibonacci number satisfies $f_n = (\phi^{n-1}-\psi^{n-1})/\sqrt{5}$, where $\phi= (1+\sqrt{5})/2 \approx 1.618$ is the golden ratio and $\psi= (1-\sqrt{5})/2$. From this formula, it can be inferred that after initial slow growth, the sequence grows exponentially with respect to the base $\phi$. Similar integer sequences can be defined from the recurrence \eqref{eq:recurrence} by fixing the initial elements, $f_1 = a \in \N$ and $f_2 = b \in \N$. 

Given a window size $w$, parameters, $a,b \in \N$, and denoting the corresponding \emph{generalized Fibonacci sequence}, $(f_n)_{n}$ by $\Fib(a,b)$, we can define a corresponding support set for an $N \times N$ attention matrix as $\Omega_{w}^{\Fib(a,b)} = \big\{(j,k): |j-k| \in \Fib(a,b), |j-k| \leq w \big\}$. 
An experimental ablation study (see Section \ref{subsec:impact-dilation}) indicates that a simple Fibonacci attention pattern can already be advantageous compared to other dilation sequences, even when deployed uniformly across heads.

\smartparagraph{Wythoff Array and its Properties.}
Among integer sequences based on order-$2$ linear recurrence relations, generalized Fibonacci sequences are attractive for creating attention support sets since by varying $a$ and $b$, a variety of integer values can be covered while retaining the same long-term growth rate (see Appendix A, Lemma 1) 
as the Fibonacci numbers. Accordingly, we use $h$ different Fibonacci-type sequences, $\Fib(a_i,b_i)$ with different initial values $a_1,\ldots,a_h \in \N$ and $b_1,\ldots,b_h \in \N$, giving rise to \emph{head-specific} attention support sets. Defining also head-specific window sizes, $w_1,\ldots,w_h \leq N$, we obtain the support set $\Omega_i$ for the $i$-th attention head matrix $A_i$ defined as $\Omega_{w_i}^{\Fib(a_i,b_i)}$ for each head index $i=1,\ldots,h$.

Within this framework, we aim to choose the sequence parameters $(a_i,b_i)_i$ such that the following three \emph{desiderata} are satisfied: \myNum{i} the overlap between different attention head support sets should be minimized, allowing for a semantic specialization of the corresponding head weights during training, \myNum{ii} the total size $\sum_{i=1}^h |\Omega_{w_i}^{\Fib(a_i,b_i)}|$ of the support sets should be small to retain efficiency, but within that constraint, \myNum{iii} as many \emph{relevant} query-key interactions as possible should be captured by at least one attention head, i.e., the set union $\cup_{i=1}^h \Omega_{w_i}^{\Fib(a_i,b_i)}$ should be maximal. 

A suitable, essentially hyperparameter-free choice can be derived from the Wythoff array \cite{Morrison-1980Stolarsky,Conway-2016Extra,Chen-2022GeneralizingWythoff}, which had been originally introduced in the context of a combinatorial game \cite{Wythoff-1907Modification}. The Wythoff array can be considered as a collection of generalized Fibonacci sequences $\{\Fib(a_i,b_i)\}_{i \in \N}$ with specific choices $a_i^{\text{Wyt}}$ and $b_i^{\text{Wyt}}$ for each $i \in \N$ that have provably \emph{no overlap}, but contain each integer exactly once \cite{Morrison-1980Stolarsky,Conway-2016Extra}. In particular, the $i$-th row sequence of the Wythoff array is given by the sequence, $\Fib(a_i^{\text{Wyt}},b_i^{\text{Wyt}})$ with initial elements, $a_i^{\text{Wyt}} = \lfloor \lfloor i \phi\rfloor \phi \rfloor$ and $b_i^{\text{Wyt}} = \lfloor\lfloor i \phi\rfloor \phi^2 \rfloor$; see Table \ref{tab:mod-wythoff-array}. 

\smartparagraph{\modelname.}
Based on the above considerations, we define a novel non-adaptive sparse attention mechanism, called \emph{\modelname}, that is designed as a drop-in replacement of full self-attention in multi-head self-attention blocks. In any given MHSA layer with $h$ heads, for a given head index $i = 1 \ldots, h$, we restrict the computation of unnormalized attention weights in $A_i \in \R^{N \times N}$ to the support set  $\Omega_i:= \Omega_{w_i}^{\Fib(a_i^{\text{Wyt}},b_i^{\text{Wyt}})}$ given as
\begin{equation} \label{def:Omegai:Fibottention}
\big\{(j,k): |j-k| \in \Fib(a_i^{\text{Wyt}},b_i^{\text{Wyt}}), |j-k| \leq w_i \big\},
\end{equation}
where the window size $w_i$ of the $i$-th head is based on two model-wide hyperparameters $\wmin$ and $\wmax$, which are chosen based on insights into the modality of the task and the data distribution. Specifically, we choose $w_i$ based on the formula, 
$w_i = \wmin + \Big\lfloor \frac{\wmax-\wmin}{h-1} (i-1) \Big\rfloor$ for $i =1,\ldots,h$, which linearly interpolates between $\wmin \leq N$, the \emph{minimal window} \emph{size bound across heads}, and the \emph{maximal window size bound across heads} $\wmax$ satisfying $\wmin \leq \wmax \leq N$. The resulting \emph{spacing} of window sizes across heads is designed to further diversify the representations learned across heads as, in the case of a large disparity between $\wmin$ and $\wmax$, heads with lower indices $i$ are biased to encode more local information, whereas heads with $w_i \approx \wmax$ are biased towards incorporating more global interactions. Following \eqref{eq:Omegamasking}, we define \modelname's sparse attention matrices as $\cA_i^{\Omega_i} = \cA_i \odot \iota_{\Omega_i}$ for each $i=1,\ldots,h$ with $\Omega_i$ satisfying \eqref{def:Omegai:Fibottention}.

For Transformer architectures with several MHSA layers, we further require that the head-wise support sets are shuffled along the layer so that the $i$-th head uses the sets,  $\{\Omega_{\pi(1)},\ldots,\Omega_{\pi(h)}\}$ within \modelname, where $\pi: [h] \to [h]$ is a random permutation function (fixed for each layer). We refer to Appendix B 
for a formal outline.

\smartparagraph{Modified Wythoff Array.} While we observe excellent performance of vanilla \modelname in image classification tasks (see Section \ref{sec:numerical}), its performance degrades in tasks in other domains due to its high degree of sparsity, which might not always capture well enough important local interactions. For such cases, we propose a variant of this sparse attention mechanism that \emph{includes two predecessor sequence elements} into each Wythoff row sequence $\Fib(a_i^{\text{Wyt}},b_i^{\text{Wyt}})$; following the recurrence \eqref{eq:recurrence}, we can define new initial sequence elements $b_i^{\text{Wyt-m}} = b_i^{\text{Wyt}} - a_i^{\text{Wyt}}$ and $a_i^{\text{Wyt-m}} =  a_i^{\text{Wyt}} -b_i^{\text{Wyt-m}}$ and support sets $\Omega_i= \Omega_{w_i}^{\Fib(a_i^{\text{Wyt-m}},b_i^{\text{Wyt-m}})}$ for each head index $i$. Unlike for the original Wythoff array, it is not the case anymore that the resulting sequences contain each integer only at most once \cite{Morrison-1980Stolarsky,Conway-2016Extra}; on the other hand, it can be proven that this modified {\modelname} shares each query-key interaction pair only across at most \emph{three} heads \cite{Conway-2016Extra}. The differences in the resulting support set patterns are visualized in Figure \ref{fig:dilated-fixed-gap} and Table  \ref{tab:mod-wythoff-array}. We refer to Appendix B 
for a detailed outline that includes both the (default) Wythoff and the Modified Wythoff variants of \modelname. 

In Appendix A-C,
we provide a proof that the total computational effort for inference in both \modelname~variants requires the computation of only $O(N \log(\wmax))$ token interactions.



\section{\modelname~for Image Classification}\label{sec:numerical}
In this section, we evaluate the performance of \modelname~across various image classification tasks in conjunction with different Transformer architectures, and compare the design to state-of-the-art sparse and efficient attention designs.

\subsection{Experimental Setup}
\noindent \textbf{Datasets.}
We report Top-1 accuracy on CIFAR-10 (C10)~\cite{cifar}, CIFAR-100 (C100)~\cite{cifar}, Tiny-ImageNet~\cite{imagenet}, and ImageNet-1K (IN-1K)~\cite{imagenet}.

\noindent \textbf{Training.} For training \modelname~ and other sparse attention methods with various Transformer architectures, we use the training recipe of DeiT~\cite{deit}. All models are trained from \textit{random initialization} for $100$ epochs with an effective batch size of $64$ using four 48GB A6000 GPUs, employing ViT-Base (ViT-B)~\cite{dosovitskiy2020vit,deit} unless otherwise specified. The hyperparameters are set as $w_{\rm min} = 5$, and $w_{\rm max} = 65$ unless otherwise stated.
\vspace*{-5pt}

\subsection{Experimental Results} \label{sec:experimental:results}

\begin{table}[t]
\centering
\small
\setlength{\tabcolsep}{3pt}
\renewcommand{\arraystretch}{1.05}
\resizebox{\linewidth}{!}{%
\begin{tabular}{lcccccc}
\toprule
\multirow{2}{*}{\textbf{Method}} &
\multicolumn{4}{c}{\textbf{Top-1 Accuracy (\%) $\uparrow$}} &
\multirow{2}{*}{\makecell[c]{\textbf{Pruning} \\ \textbf{Ratio $\uparrow$}}} \\
\cmidrule(lr){2-5}
& C10 & C100 & Tiny-IN & IN-1K & \\
\midrule
Full Attention~\cite{attention}                             & 84.2   & 59.4   & \underline{75.2}   & \textbf{75.9} & 0\% \\
\midrule
Random Attention~\cite{zaheer2020big}       & 80.7   & 56.5   & 69.4   & 68.7 & 98.52\% \\
Top-$k$ Attention~\cite{gupta2021memoryefficient} & 81.1   & 57.1   & 72.9   & 73.4 & 98.48\% \\
Sparse Transformer~\cite{SparseTransformer19} & 81.3   & 58.2   & 70.3   & 68.7 & 98.47\% \\
BigBird~\cite{zaheer2020big}                & 86.8   & 63.4   & 73.4   & 71.5 & 97.96\% \\
Longformer~\cite{Beltagy2020LongformerTL}   & \underline{87.8}   & \underline{64.7}   & 74.3   & 71.6 & 98.47\% \\
Linformer~\cite{wang2020linformer}          & 73.1   & 48.7   & 62.8   & 60.1 & 97.96\% \\
Efficient Attention~\cite{shen2021efficient} & 84.4   & 62.6   & 73.7   & 70.1 & 97.98\% \\
\rowcolor{lightgray!40}
\textbf{\modelname (Ours)}                  & \textbf{91.8} & \textbf{70.7} & \textbf{79.1} & \underline{75.5} & 98.01\% \\
\bottomrule
\end{tabular}%
}
\caption{Performance comparison on CIFAR-10, CIFAR-100, Tiny-ImageNet, and ImageNet datasets with all methods integrated in ViT-B, highlighting the effect of attention pruning across approaches.}
\label{tab:sota}
\end{table}

\smartparagraph{\modelname vs.\ Other Sparse Attention Mechanisms.}
Table~\ref{tab:sota} compares our proposed \modelname against representative sparse-attention baselines across image classification datasets. We include a standard ViT-B~\cite{deit} with full self-attention~\cite{attention} as a dense reference, alongside random pruning, Top-$k$ pruning \cite{gupta2021memoryefficient}, Sparse Transformer~\cite{SparseTransformer19}, Efficient Attention~\cite{shen2021efficient}, Longformer~\cite{Beltagy2020LongformerTL}, BigBird~\cite{zaheer2020big}, and Linformer~\cite{wang2020linformer}. For a fair comparison, all sparse variants are configured to operate at a similar attention cost of approximately $0.014$ GFLOPs, corresponding to pruning ratios close to $98\%$, whereas the dense ViT-B baseline requires $0.72$ GFLOPs. Despite this extreme sparsity, \modelname consistently delivers strong performance across datasets. It substantially improves over the ViT-B baseline using full attention on C10, C100, and Tiny-IN, while remaining competitive on the more challenging IN-1K benchmark. In particular, although a small accuracy gap remains on IN-1K, \modelname achieves this performance while explicitly evaluating only about $2\%$ of token-to-token interactions, highlighting a favorable accuracy--efficiency trade-off at large scale. These results indicate that a significant portion of dense self-attention in ViTs is redundant, and that carefully designed sparse patterns can preserve, or even enhance, recognition accuracy under aggressive pruning. 

To better understand the impact of sparsity patterns in this high-pruning regime, we outline how different methods allocate their limited budget of token interactions: In Table~\ref{tab:sota}, \textit{BigBird} and \textit{Longformer} denote adaptations of the sparse attention schemes introduced in~\cite{zaheer2020big,Beltagy2020LongformerTL}, respectively, applied to ViTs under comparable pruning ratios. Since all sparse models operate at similar computational cost, observed performance differences primarily arise from how the remaining interactions are structured. Random pruning removes a fixed fraction of random attention entries without imposing any spatial or semantic structure. Although its computational cost matches that of structured methods, its accuracy is markedly lower across all datasets, including IN-1K, demonstrating that unstructured sparsity fails to preserve critical token relationships. Top-$k$ pruning, which retains the $k$ largest attention scores per query token, performs better than random masking, confirming that saliency-based criteria are beneficial under sparsity. However, Top-$k$ pruning underperforms structured approaches such as BigBird, Longformer and  \modelname on C10 and C100 by $5\%$--$10\%$, with a reduced performance gap on the mid-sized Tiny-IN dataset. We observe that while top-$k$ attention exceeds the performance of most structured sparse patterns on the large-scale IN-1K dataset, \modelname is the only mechanism that still outperforms top-$k$ for a comparable pruning ratio and is the only efficient attention mechanism that comes close to (by $0.4\%$ top-$1$ accuracy) the performance of a ViT-B trained with full attention. 

We conjecture that a main reason for the strong performance of \modelname is its head-specific, complementary masks, which enable distinct heads to capture local or global information in diverse and non-redundant feature representations. This phenomenon is further studied in Section \ref{sec:headdiversity} and Table~\ref{tab:diversity}.

\begin{table}[t]
\centering
\small
\setlength{\tabcolsep}{6pt}
\renewcommand{\arraystretch}{1.05}
\resizebox{\linewidth}{!}{%
\begin{tabular}{lccccc}
\toprule
\multirow{2}{*}{\textbf{ViT Variants}} &
\multicolumn{4}{c}{\textbf{Top-1 Accuracy (\%) $\uparrow$}} &
\multirow{2}{*}{\makecell[c]{\textbf{Pruning}\\\textbf{Ratio $\uparrow$}}} \\
\cmidrule(lr){2-5}&
C10 &
C100 &
Tiny-IN &
IN-1K &  \\
\midrule
ViT-B~\cite{deit}                       & 84.2 & 59.4 & 75.2 & \textbf{75.9} & 0\%  \\
\rowcolor{lightgray!40}
\textbf{+ \modelname}    & \textbf{91.8} & \textbf{70.7} & \textbf{79.1} & 75.5 & \textbf{98.0\%}  \\
\addlinespace[2pt]
UPop~\cite{shi2023upop}                 & 76.3 & 52.1   & 71.6   & \textbf{73.9} & 0\%  \\
\rowcolor{lightgray!40}
\textbf{+ \modelname}    & \textbf{82.8} & \textbf{56.8}   & \textbf{74.8}   & 72.0 & \textbf{97.1\%}  \\
\addlinespace[2pt]
iFormer~\cite{zheng2025iformer_iclr}    & \textbf{92.7} & \textbf{73.3}   & \textbf{88.4}   & \textbf{77.9} & 0\%  \\
\rowcolor{lightgray!40}
\textbf{+ \modelname}    & 92.2 & 72.7   & 88.1   & 77.1 & \textbf{94.4\%}  \\
\addlinespace[2pt]
Swin-B~\cite{liu2021swin}               & 81.2 & 60.7 & \textbf{82.8}   & \textbf{79.6}    & 0\%   \\
\rowcolor{lightgray!40}
\textbf{+ \modelname}    & \textbf{82.4} & \textbf{61.0} & 81.8   & 78.3    & \textbf{95.4\%}  \\
\addlinespace[2pt]
ConViT-B~\cite{d2021convit}             & 90.8 & 66.8 & \textbf{82.9}   & \textbf{82.1}    & 0\%   \\
\rowcolor{lightgray!40}
\textbf{+ \modelname}    & \textbf{90.9} & \textbf{67.5} & 82.8   & 80.1    & \textbf{96.6\%}  \\
\bottomrule
\end{tabular}%
}
\caption{Performance of \modelname integrated in various ViT variants, achieving 94–98\% attention pruning while maintaining competitive Top-1 accuracy.}
\label{tab:backbones}
\end{table}

\smartparagraph{\modelname Integrated into Various ViT Variants.}
In Table~\ref{tab:backbones}, we investigate the effect of integrating \modelname into a variety of ViT variants: ViT-B~\cite{deit}, UPop~\cite{shi2023upop}, iFormer~\cite{zheng2025iformer_iclr}, Swin-B~\cite{liu2021swin}, and ConViT-B~\cite{d2021convit} (see App. C-A 
for further details about the setup). For ViT-B and UPop, \modelname consistently yields higher accuracy on C10, C100, and Tiny-IN compared to their respective dense counterparts, while maintaining comparable performance on IN-1K for ViT-B and a slightly lower IN-1K accuracy for UPop. In both cases, the attention GFLOPs are reduced to only $2.0\%$ (ViT-B) and $2.9\%$ (UPop) of the dense baseline, demonstrating that substantial computational savings are possible even when the dense model is already well tuned.
For iFormer, which already embeds a strong inductive bias via single-head attention, incorporating \modelname results in an accuracy that remains close to the original backbone across all datasets, while reducing attention FLOPs to $5.6\%$ of the original cost, leading to a reasonable efficiency-effectiveness trade-off.
For hierarchical backbones such as Swin-B and ConViT-B, which introduce local inductive biases through token merging or convolutions, \modelname yields accuracy that is broadly comparable to the base models. For Swin-B, the performance with \modelname stays within a narrow band of the dense model on all four datasets, while reducing attention GFLOPs to $4.6\%$ of the original. For ConViT-B, \modelname maintains similar accuracy on C10, C100, and Tiny-IN, and a slightly lower performance on IN-1K, but with attention GFLOPs reduced to $3.4\%$ of the dense baseline. Overall, these results indicate that \modelname is highly compatible with a range of Transformer architectures, delivering considerable computational savings together with improved performance for architectures with limited inductive bias when trained on small-sized datasets, and no or limited accuracy degradation when trained on mid-sized or large datasets compared to full attention variants, depending on the backbone and dataset.

\begin{table}[ht]
\centering
\small
\setlength{\tabcolsep}{5.9pt}
\caption{Top-$1$ accuracy of ViT variants trained with sparse attention mechanisms, configured to compute only $\approx2\%$ of self-attention entries.}
\label{tab:pruning_results}
\resizebox{\textwidth}{!}{%
\begin{tabular}{lcccccccc}
\toprule
\multirow{2}{*}{\textbf{Method}} &
\multicolumn{2}{c}{\textbf{ViT-B}} &
\multicolumn{2}{c}{\textbf{UPop}} &
\multicolumn{2}{c}{\textbf{ConViT-B}} &
\multirow{2}{*}{\shortstack{\textbf{Pruning}\\\textbf{Ratio $\uparrow$}}} \\
\cmidrule(lr){2-3} \cmidrule(lr){4-5} \cmidrule(lr){6-7}
 & C10 & C100 & C10 & C100 & C10 & C100 & \\
\midrule
Full Attention~\cite{attention} & 84.2 & 59.4 & 76.3 & \underline{52.1} & \underline{90.8} & \underline{66.8} & 0\% \\
\midrule
Random Attention~\cite{zaheer2020big} & 80.7 & 56.5 & 72.3 & 46.8 & 89.1 & 64.5 & 98.52\% \\
Top-$k$ Attention~\cite{gupta2021memoryefficient} & 81.1 & 57.1 & 71.5 & 36.8 & 89.6 & 64.4 & 98.48\% \\
BigBird~\cite{zaheer2020big} & 86.8 & 63.4 & 79.1 & 37.7 & 89.8 & 64.7 & 97.96\% \\
Longformer~\cite{Beltagy2020LongformerTL} & \underline{87.8} & \underline{64.7} & \underline{79.6} & 39.4 & 89.2 & 64.3 & 98.47\% \\
\rowcolor{gray!15}
\textbf{\modelname~(Ours)} & \textbf{89.5} & \textbf{64.9} & \textbf{82.8} & \textbf{56.8} & \textbf{90.9} & \textbf{67.5} & 98.01\% \\
\bottomrule
\end{tabular}%
}
\end{table}

\smartparagraph{Sparse Attention Mechanisms Across ViT Variants.}
Table~\ref{tab:pruning_results} reports the performance of \modelname and selected sparse attention methods across multiple ViT-style backbones (ViT-B, UPop and ConViT-B) trained on C10 and C100 under a high sparsity setting where only $2\%$ of self-attention entries are retained. We observe that among the considered sparse attention methods, only \modelname consistently improves accuracy over the full attention baseline. This trend persists across all evaluated backbones, indicating that \modelname's advantages robustly generalize across Transformer architectures. 


\section{\modelname~in Other Visual Domains} \label{sec:fibottention:othervisual}
We demonstrate the versatility of \modelname by integrating it into ViTs designed for solving visual perception tasks beyond image classification.

\subsection{Video Action Classification}

\noindent\textbf{Datasets.} We evaluate and report top-1 action classification accuracy for \modelname~using three action recognition datasets: Toyota Smarthome~\cite{smarthome} and Northwestern-UCLA Multiview Activity 3D~(NUCLA)~\cite{nucla}. The Toyota Smarthome dataset comprises $\sim$$16\text{K}$ videos across $31$ classes. Here, we adhere to the cross-subject (CS) and cross-view (CV2) protocols. The NUCLA dataset consists of $\sim$$1.2\text{K}$ video clips with subjects performing 10 different action classes and we use the cross-subject (CS) protocol.

\noindent\textbf{Training.} For this experiment, we employ the divided-space-time attention variant of TimeSformer \cite{timesformer} for action classification. We integrate \modelname\ into the spatial attention module of TimeSformer, considering that the temporal attention module already processes dense attention across the same patch in contiguous frames.
For the implementation of \modelname, we use both its Wythoff and Modified Wythoff variants. The hyperparameters for \modelname~are set to $\wmin = 1$ and $\wmax = 196$.
Given that the TimeSformer architecture fundamentally resembles a ViT-B with additional attentional modules, it is initialized with IN-1K pre-trained weights. All video models are trained with a batch size of 32 for 15 epochs.
For the Toyota Smarthome dataset, we process video clips of size $8\times224\times224$ with a sampling rate of 1/32, while for NUCLA, we use video clips of size $16\times224\times224$ with a sampling rate of 1/4.

\noindent\textbf{Results.} In Table~\ref{tab:video}, we compare the action classification results using TimeSformer and other attention mechanisms (BigBird, and {\modelname}) integrated within TimeSformer. We observe that \modelname~with Modified Wythoff instantiation outperforms all baselines on the Smarthome and NUCLA protocols, utilizing a masking percentage of 94\%. \modelname\ with Modified Wythoff facilitates increased local interactions among query-key pairs compared to the original Wythoff sequences, albeit at the expense of a reduced masking ratio (by 1.5\%). The Modified Wythoff proves essential in our video experiments, where capturing the temporal evolution of local patches is critical for learning discriminative spatiotemporal representations.

\begin{table}[b]
  \centering
  \scriptsize
  \setlength{\tabcolsep}{3.5pt}
  \resizebox{\linewidth}{!}{%
    \begin{tabular}{lccc}
      \toprule
      \textbf{Method} & \multicolumn{2}{c}{\textbf{SmartHome}~\cite{smarthome}} & \textbf{NUCLA}~\cite{nucla} \\
                      & \textbf{CS} & \textbf{CV2} & \textbf{CS} \\
      \midrule
      TimeSformer~\cite{timesformer} & 52.2 & 36.6 & 32.9 \\
      \quad + BigBird~\cite{zaheer2020big}                 & 51.4 & 40.1 & 50.9 \\
      \rowcolor{gray!15}
      \quad + \modelname~(Wythoff)            & 55.6 & 38.6 & 49.3 \\
      \rowcolor{gray!15}
      \quad + \modelname~(Modified Wythoff)   & \textbf{57.1} & \textbf{42.3} & \textbf{59.6} \\
      \bottomrule
    \end{tabular}%
  }
  \caption{\modelname\ Top-1 accuracy on Smarthome and NUCLA.}
  \label{tab:video}
\end{table}


\subsection{Robot Learning}
For robotics experiments, we assess the performance of \modelname~for behavioral cloning~\cite{florence2021implicit} in which we aim to learn a robot policy by training a model on state-action pairs obtained from human examples.

\noindent\textbf{Datasets.}
We evaluate three datasets: Can and Lift from Robomimic~\cite{robomimic2021}, and PushT from Implicit Behavioral Cloning~\cite{florence2021implicit}. In Lift, the robot must lift a cube to a specific height. In Can, the robot must move a can into a box. In PushT, the robot must align a T-shaped block with a T-shaped outline. We provide visuals of all three datasets in Figure~\ref{fig:robotics-dataset-visuals}.

\begin{figure}[t]
    \centering
    \includegraphics[width=0.8\textwidth]{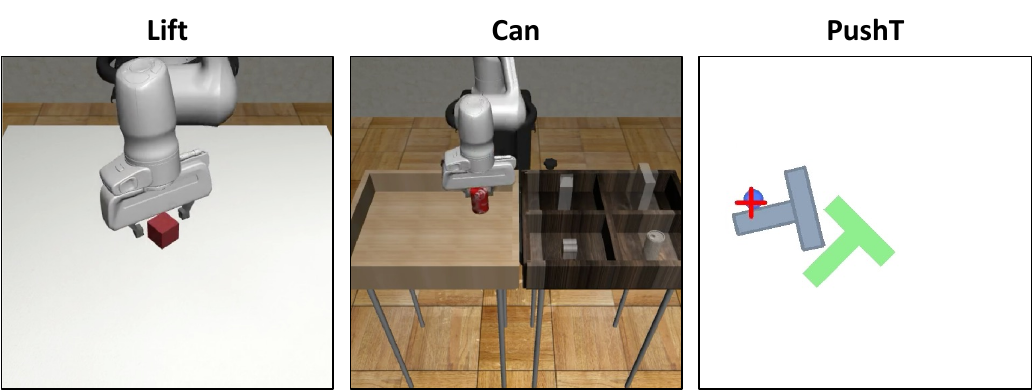}
    \caption{Sample frames from the datasets used in our robotics experiments.}
    \label{fig:robotics-dataset-visuals}
\end{figure}

\noindent\textbf{Training.}
Building upon the Crossway Diffusion~\cite{li2024crosswaydiffusion} framework, we modify the architecture by substituting the ResNet visual backbone with a ViT~\cite{dosovitskiy2020vit} and incorporating \modelname into standard ViT self-attention layers. We employ a batch size of 64 and utilize ViT-B with a patch size of 8 as the visual backbone. For all other hyperparameters, including the number of epochs, we follow~\cite{li2024crosswaydiffusion}.

\noindent\textbf{Results.}
We report the average task completion accuracy in Table~\ref{tab:robotics} and find that \modelname with Modified Wythoff instantiation leads to improvements over both the baseline ViT and ViT with
BigBird attention.

\begin{table}[h]
    \centering
    \footnotesize
    \setlength{\tabcolsep}{4.8pt}
    \resizebox{\linewidth}{!}{%
    \begin{tabular}{lc p{1cm} p{1cm} p{1cm}}
        \toprule
        \textbf{Visual Backbone} & \textbf{Lift} & \textbf{Can} & \textbf{PushT} \\
        \toprule
        ViT-B~\cite{deit} & 0.980 & 0.960 & 0.678 \\
        \quad+ BigBird~\cite{zaheer2020big} & 1.000 & 0.880 & 0.690 \\
        \rowcolor{lightgray!40}
        \quad+ \modelname~(Wythoff) & 0.820 & 0.940 & 0.630 \\
        \rowcolor{lightgray!40}
        \quad+ \modelname~(Modified Wythoff) & \textbf{1.000} & \textbf{0.960} & \textbf{0.720} \\
        \bottomrule
    \end{tabular}%
    }
    \caption{Performance of {\modelname} on behavioral cloning for robotics. The average task completion accuracy is reported.}\label{tab:robotics}
\end{table}

\section{Ablation Studies and Analytical Findings}\label{sec:further-ablation}

Beyond the experiments above, we perform a set of controlled ablations to analyze the effect of individual architectural and sparsity choices in \modelname. Unless explicitly stated, all experiments in this section use ViT-B as the visual backbone.

\subsection{Validation of Head Diversity}\label{sec:headdiversity}
One of the design principles behind \modelname~is that the resulting feature representations are more \emph{diverse across heads} than those of standard MHSA. To validate this claim, we consider the following analysis protocol: For ViT-B trained on C10 with different sparse attention mechanisms, we perform inference for single CIFAR-10 images $X$ and compute the last-layer feature matrices $Y_i = \sigma(A^{\Omega_i}_i) V_i$ for each head $i=1,\ldots,h$, where $A^{\Omega_i}$ is as in \eqref{eq:Omegamasking}. Then we measure the distances $\|Y_i - Y_j\|_F$ of feature representations across heads for each pair $(Y_i,Y_j)$, $1 \leq i < j \leq h$ and average such relative distances in the diversity metric
\[
\operatorname{diversity}(X) = \frac{2}{h(h-1)}
\sum_{i<j}\;
\frac{\,\left\lVert Y_i - Y_j \right\rVert_F}{
\left\lVert Y_i \right\rVert_F + \left\lVert Y_j \right\rVert_F },
\]
    which we report for many different input images $X$. Within this framework, higher values of $\operatorname{diversity}(X)$ indicate greater variability in information captured by different heads for a given input $X$. Considering $10^4$ input images $X$, we report aggregated information (minimum, maximum, quartiles and median) of the distribution of the head diversity metric $\operatorname{diversity}(X)$ in Table~\ref{tab:diversity} for the considered sparse attention methods. We observe that \modelname~consistently attains a significantly larger median head diversity than ViT-B and all other sparse attention mechanisms. Arguably, this is due to \modelname's deliberate use of complementary, head-specific sparsity patterns, whereas the sparse baselines apply the same mask across heads. The increased observed head diversity indicates that \modelname~captures richer and more complementary feature representations than other sparse variants, which we believe is key to its superior performance under constrained FLOPs (see Table~\ref{tab:sota}).
\begin{table}[ht]
\centering
\small
\setlength{\tabcolsep}{3.8pt}
\caption{Head diversity statistics (Frobenius distances across last-layer features) over $10^4$ CIFAR-10 images. \modelname yields substantially higher diversity across heads.}
\label{tab:diversity}
\resizebox{\linewidth}{!}{%
\begin{tabular}{lccccc}
\toprule
\textbf{Method} & \textbf{Min} & \textbf{Max} & \textbf{Median} & \textbf{Q1} & \textbf{Q3} \\
\midrule
ViT-B~\cite{deit}                  & 27.57 & 61.53 & 43.00 & 37.68 & 48.85 \\
+ Random Attention~\cite{zaheer2020big}                  & 13.68 & 47.74 & 29.24 & 22.08 & 35.31 \\
+ Sparse Transformer~\cite{SparseTransformer19} & 28.84 & 63.71 & 43.27 & 38.27 & 49.65 \\
+ Longformer~\cite{Beltagy2020LongformerTL}     & 34.49 & 65.27 & 49.26 & 44.52 & 54.29 \\
+ BigBird~\cite{zaheer2020big}                  & 34.15 & 72.08 & 51.86 & 45.67 & 58.55 \\
\rowcolor{gray!15}
\textbf{+ \modelname} & 41.63 & 75.95 & \textbf{57.34} & 51.98 & 63.22 \\
\bottomrule
\end{tabular}%
}
\end{table}


\begin{figure*}[t]
\floatsetup[table]{capposition=bottom}
\CenterFloatBoxes

\begin{floatrow}[3]

\ffigbox[0.31\textwidth]{
\caption{Inference costs of ViT-B, ViT-T, ConViT-B (Vanilla vs.\ \modelname).
} \label{fig:total_flops}
}{
\centering
\includegraphics[width=\linewidth]{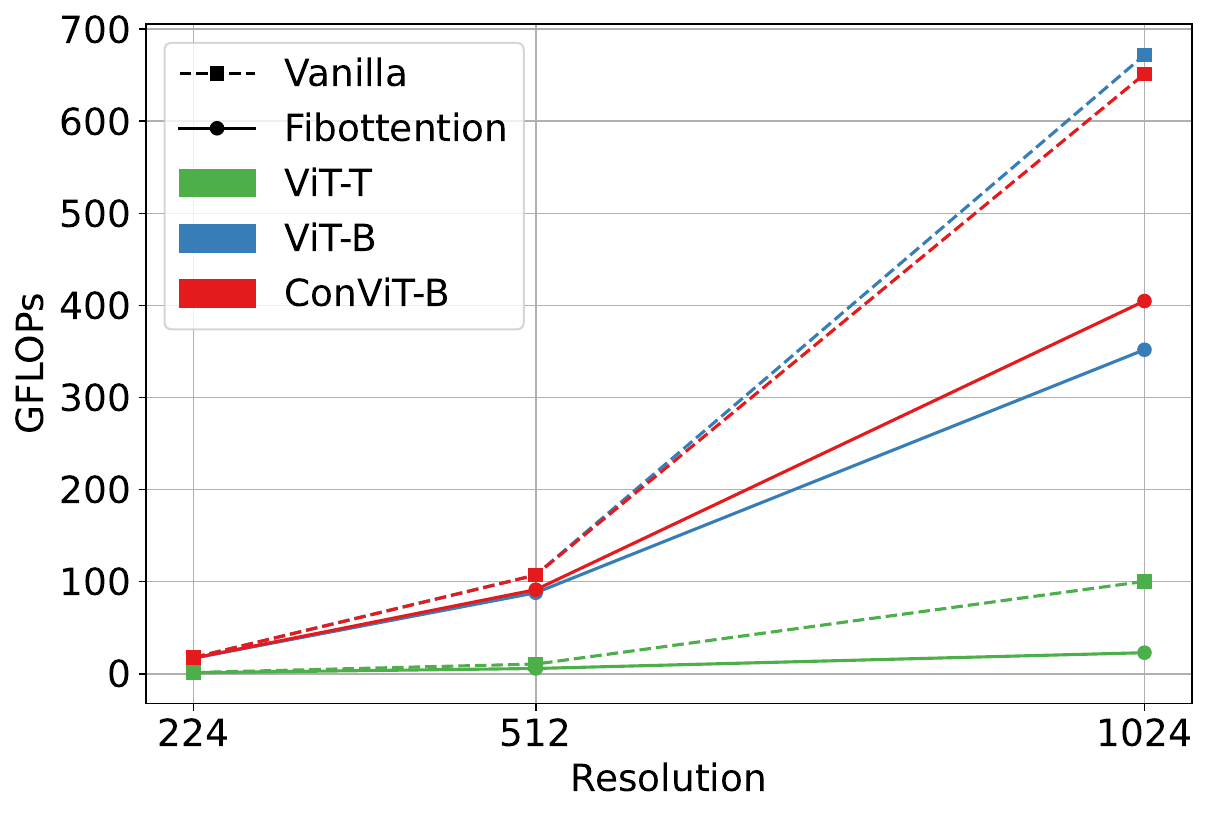}
}

\ffigbox[0.31\textwidth]{
\caption{Test accuracy of ViT-B models for corrupted datasets CIFAR-10 C and CIFAR-100 C, trained with and without \modelname \label{fig:corruption}
}}
{
\centering
\includegraphics[width=\linewidth]{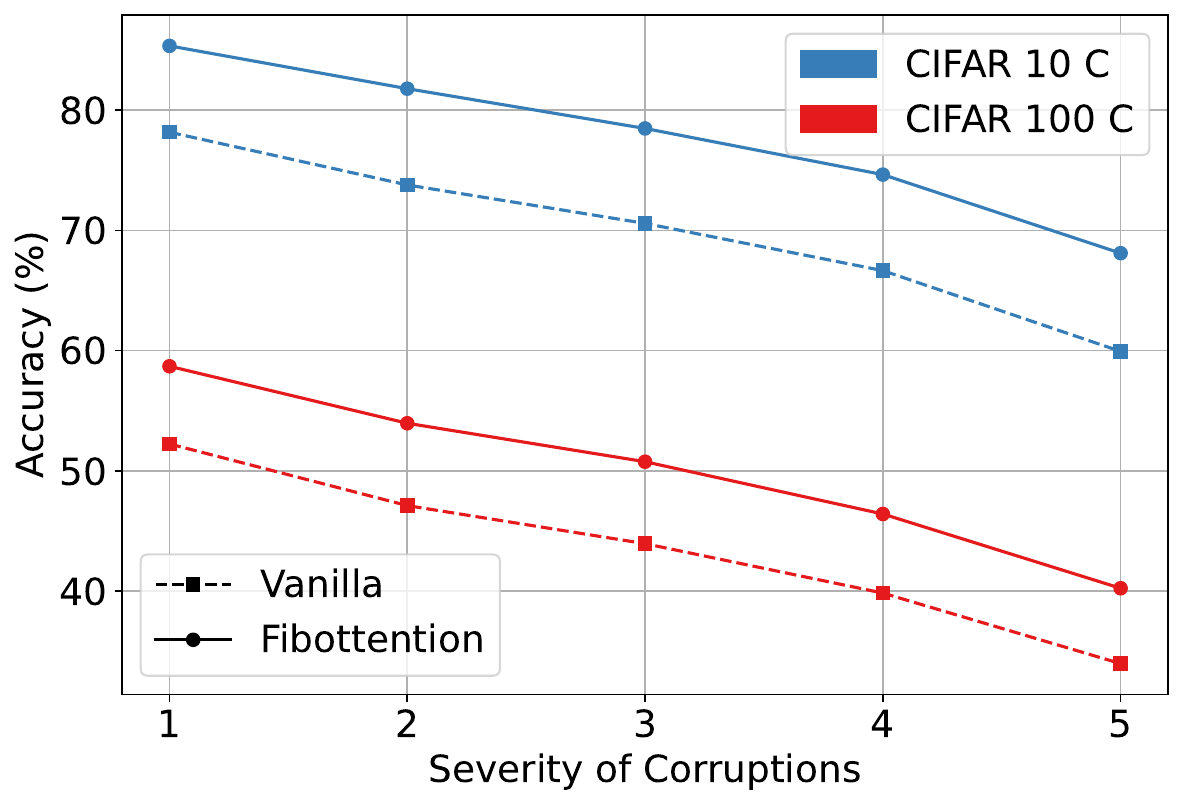}
}

\killfloatstyle
\ttabbox{
    \begin{tabular}{lcc}
        \toprule
        \textbf{Sequences} & \textbf{C10}~\cite{cifar} & \textbf{C100}~\cite{cifar} \\
        \midrule
        $(2^n)_{n\in\mathbb{N}}$              & 86.1 & 61.6 \\
        $(3^n)_{n\in\mathbb{N}}$              & 85.3 & 60.2 \\
        $(n^2)_{n\in\mathbb{N}}$              & 85.8 & 61.5 \\
        $(n^3)_{n\in\mathbb{N}}$              & 84.6 & 59.1 \\
        \rowcolor{lightgray!40}
        $\Fib(1,1)$                           & \textbf{87.3} & \textbf{63.2} \\
        \bottomrule
    \end{tabular}
}
{\caption{Effect of different dilation sequences $(f_n)_{n\in\mathbb{N}}$ on CIFAR-10~\cite{cifar} and CIFAR-100~\cite{cifar}, with a fixed window size $w_i = N/3$. \label{tab:sequences}}}

\end{floatrow}
\end{figure*}

\subsection{Validation of Inductive Bias}\label{sec:inductivebias}
To validate the inductive bias in \modelname, we examine its “inside-to-inside” attention patterns within specific regions of input images. \modelname's structured pruning mechanism is designed to promote focused attention within predefined boundaries. We hypothesize that this mechanism leads to higher average attention scores and lower variance within localized areas, reflecting a stronger inductive bias toward these regions. To test this hypothesis, we calculate the variance of attention scores within object-localized regions by aggregating and analyzing attention values across designated patches. This approach enables a direct comparison of attention distribution patterns between a \modelname-trained ViT-B model and a corresponding model with full attention.
Results averaged over 100 images show that \modelname~achieves a significantly lower variance of attention scores within localized areas (\(2.33 \times 10^{-5}\)) compared to full attention ViT-B (\(8.76 \times 10^{-5}\)). This reduced variance indicates that \modelname~maintains more concentrated and consistent attention within target regions.


\subsection{Computational Complexity}
\label{subsec:complexity} 
We recall that the inference FLOPs that accrue at a \modelname-modified self-attention layer scale with $O(N \log N)$ if $N$ is the number of token/patches (cf. App. A-C). 
To illustrate the implications of this in practice, we compare the projected inference cost per input of ViT-B~\cite{dosovitskiy2020vit}, ViT-T~\cite{wu2022tinyvit}, and ConViT~\cite{d2021convit} in Figure~\ref{fig:total_flops}, considering their dense (i.e., \emph{vanilla}) self-attention alongside their \modelname variants. It is well known \cite{dosovitskiy2020vit,Kaplan2020scaling} that for smaller resolutions, a substantial part of the computation comes from non-MHSA components of the Transformer architectures, such as the feed-forward/MLP blocks \cite{Kaplan2020scaling}, which is why the total inference FLOPs improvement of \modelname (or any sparse attention method, for that matter) is rather limited at a $224\times224$ resolution. On the other hand, we see in Figure~\ref{fig:total_flops} that 
as resolution, and thus, $N$ increases, self-attention computation becomes a much larger fraction of the total computational load so that \modelname is able to reduce FLOP counts by up to $48\%$. In ConViT~\cite{d2021convit}, which contains 10 GPSA and 2 MHSA modules, we apply \modelname only to the corresponding MHSA modules.

\subsection{Robustness of \modelname}
\label{subsec:robustness}
To understand \modelname's robustness with respect to distribution shifts, we evaluate its impact on  predictive performance for the C10 and C100 {\em corrupted}~\cite{hendrycks2019robustness} datasets. These datasets contain test images from the original datasets, corrupted by 19 common image corruptions with severity levels ranging between 1--5 (see Appendix E-C 
for more details). 
For the evaluation, we use models pre-trained on default C10 and C100 data, and evaluate them on the corrupted datasets across varying severity. Figure~\ref{fig:corruption} shows that \modelname~ models perform consistently better than their dense MHSA counterparts across all corruption severity levels.


\subsection{Choice of Dilation Sequences}
\label{sec:selecting-dilation-patterns}

Table~\ref{tab:sequences} compares several choices of dilation sequences $(f_n)_n$ used to construct the diagonal offsets in the sparsity sets $\Omega_i$, while keeping a fixed window size of $w_i = \wmin = \wmax = N/3$ for all heads. We include polynomial growth sequences such as $(n^2)_{n \in \mathbb{N}}$ and $(n^3)_{n \in \mathbb{N}}$, as well as exponential sequences $(2^n)_{n \in \mathbb{N}}$ and $(3^n)_{n \in \mathbb{N}}$ that resemble dilated patterns commonly used in prior work~\cite{li2019enhancing}. 
Among all options considered, the standard Fibonacci sequence $\Fib(1,1)$ yields the highest accuracy on both CIFAR-10 and CIFAR-100. Intuitively, Fibonacci growth is slow enough that many small offsets remain available, which reinforces local interactions, but still introduces a few longer-range diagonals. This balance appears to be better suited to image data than the more aggressive growth patterns of the alternative sequences.


\subsection{Impact of Dilation Across Heads}
\label{subsec:impact-dilation}

In Figure~\ref{fig:unified}(a) we examine the role of varying dilation sequences across heads. We compare two settings that both use sequences of the form $(f_n)_{n\in\mathbb{N}} = (c \cdot n)_{n\in\mathbb{N}}$: a \textit{fixed} configuration in which all heads share the same dilation sequence, and a \textit{variable} configuration in which each head receives a shifted version of the sequence, so that offsets differ from head to head.
At comparable masking ratios, allowing the dilation patterns to vary across heads consistently improves performance. For instance, at $\wmin = 2$, the masking ratio increases from $72.8\%$ for fixed sequences to $85.5\%$ for variable sequences, while resulting in improved accuracy. This confirms that head-wise diversity in the sparsity pattern is an effective way to maintain performance even for increased attention sparsity levels.

\begin{figure*}[t]
    \centering

    \subfloat[]{%
        \includegraphics[width=0.32\linewidth]{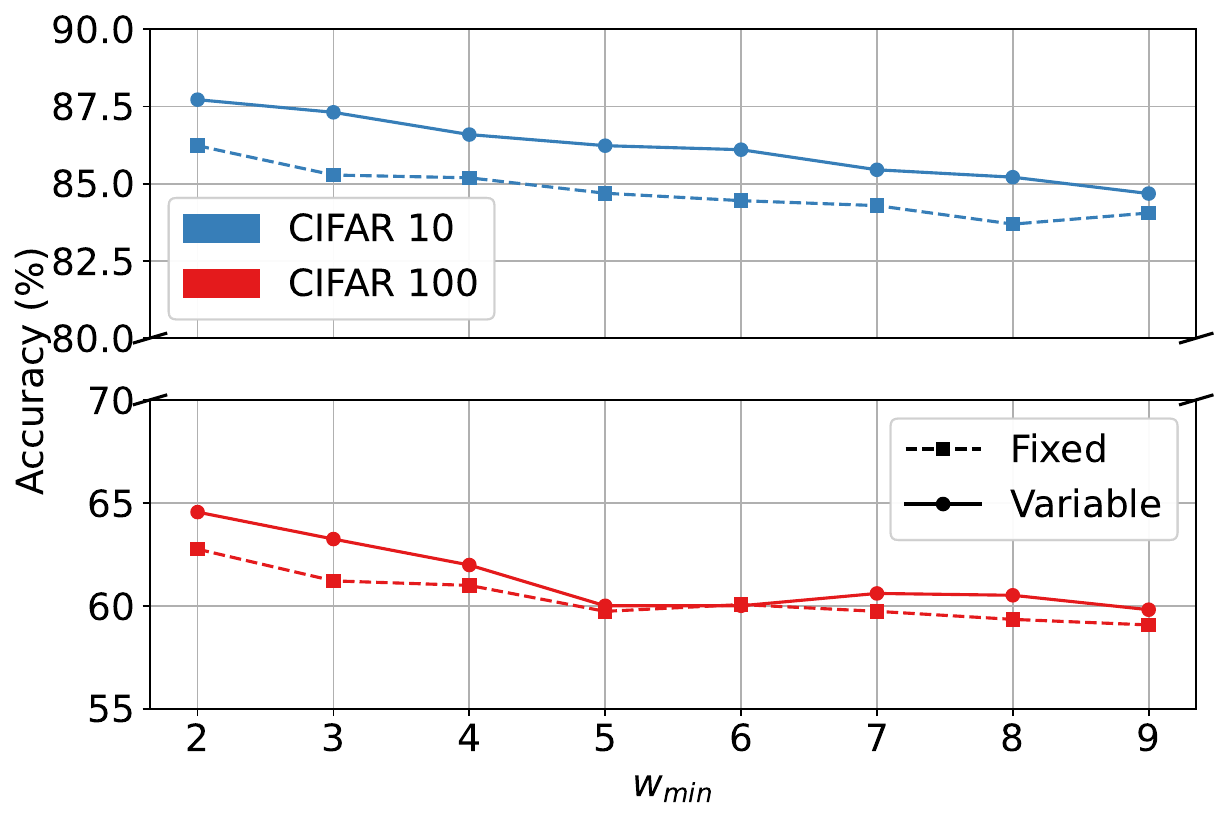}%
        \label{fig:dilated-fixed-gap}%
    }\hfill
    \subfloat[]{%
        \includegraphics[width=0.32\linewidth]{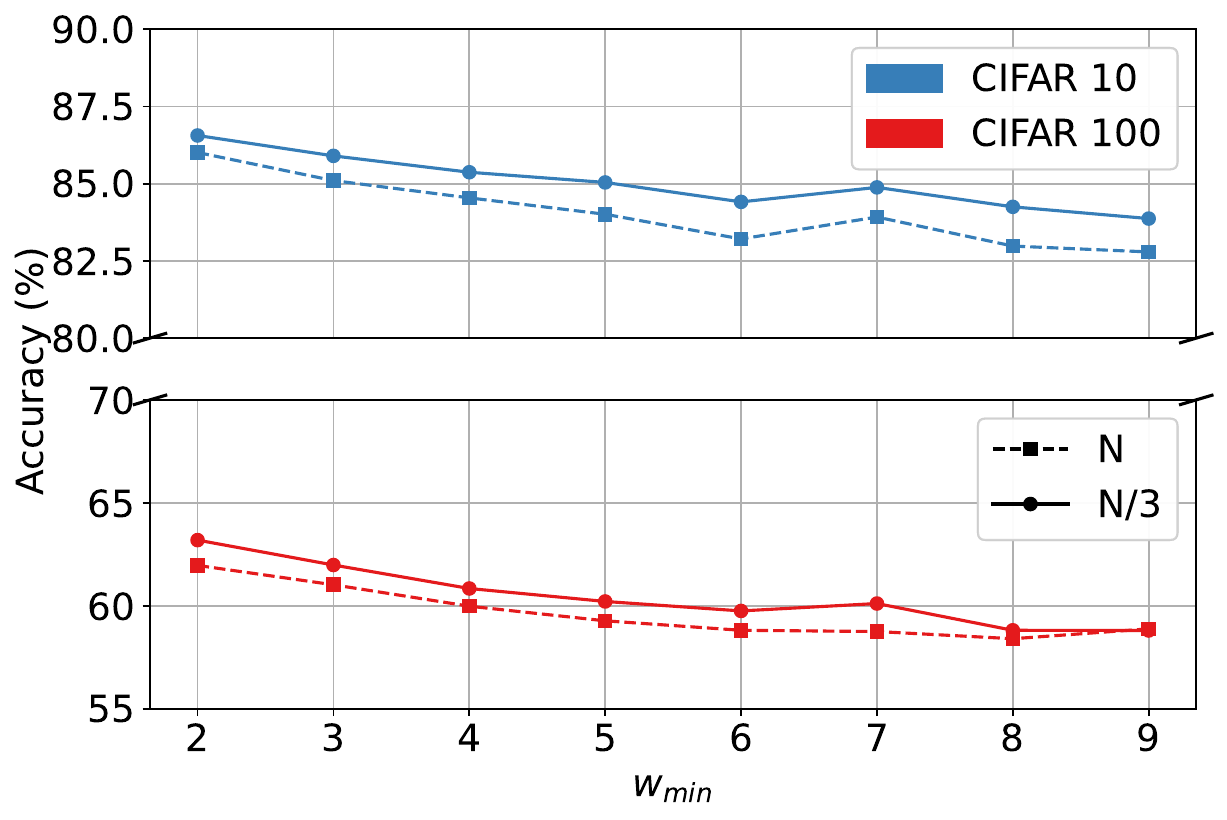}%
        \label{fig:fixed-offset}%
    }\hfill
    \subfloat[]{%
        \includegraphics[width=0.32\linewidth]{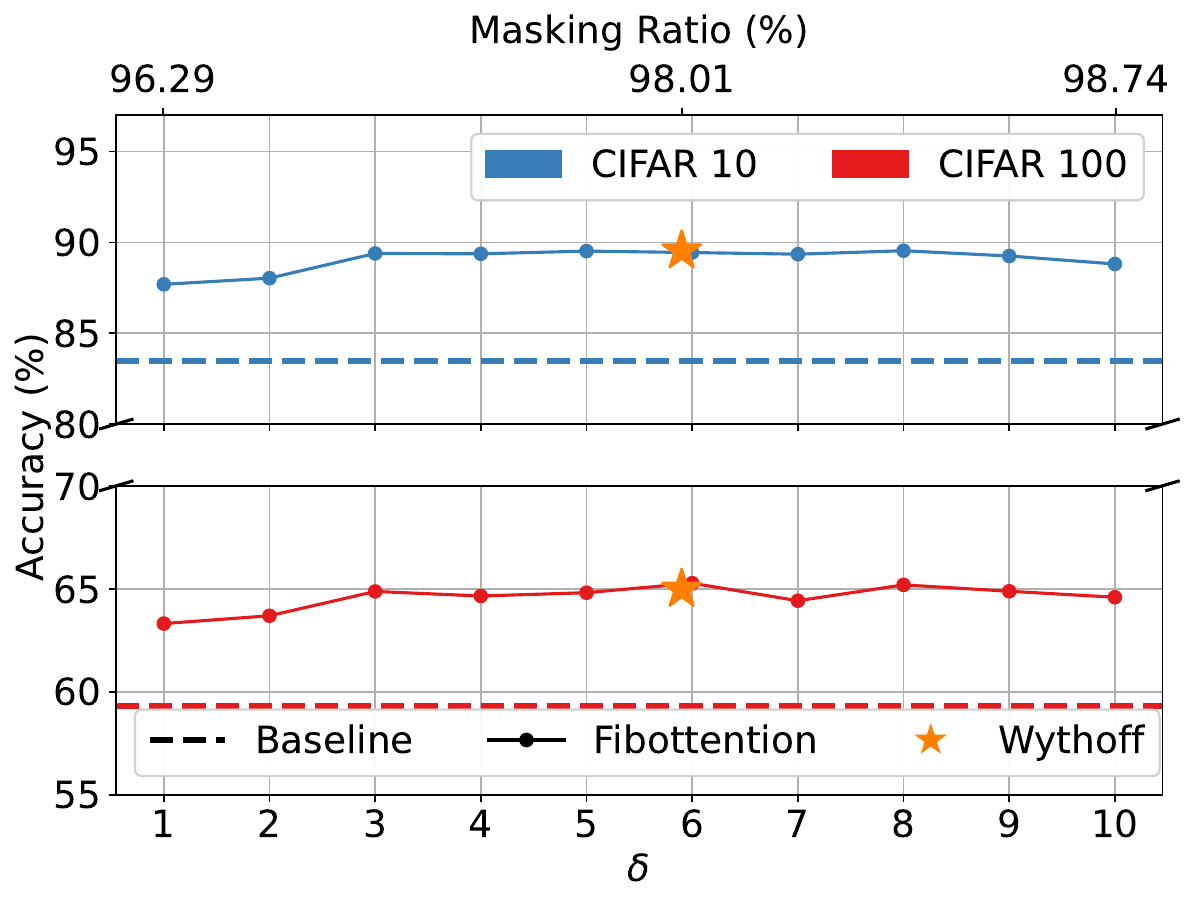}%
        \label{fig:fixed-index}%
    }

    \caption{Ablation study of (a) impact of dilation with sequences $(f_n)_{n\in\N}=(c n)_{n\in\N}$ fixed and variable across heads where $w_i=5h_i$; (b) choice of $\wmax$ with sequences $(f_n)_{n\in\N}=\Fib(\wmin,2\wmin)$ where $\wmax=N$ and $\wmax=N/3$ fixed across all heads; and (c) variable dilation sequences $\Fib(i+\delta,i+\delta)$ for the $i$-th head, $i\in\{1,\ldots,12\}$, with varying $\delta$, vs.\ Wythoff.}
    \label{fig:unified}
\end{figure*}


\subsection{Choice of Window Size Limit \texorpdfstring{$\wmax$}{wmax}}
\label{sec:choice-of-wmax}

Figure~\ref{fig:unified}(b) compares two ways of setting the maximum window size $\wmax$ when using Fibonacci sequences of the form $(f_n)_{n\in\mathbb{N}} = \Fib(\wmin, 2 \wmin)$: a fully global setting where $\wmax = N$, and a more local setting where $\wmax = N/3$.  
We observe that $\wmax = N/3$ consistently leads to better accuracy than $\wmax = N$. In image classification most of the discriminative structure tends to be spatially localized, so interactions with very distant tokens are often less informative and can even dilute object-level features. Restricting $\wmax$ to roughly a third of the sequence length encourages attention to focus on more relevant neighborhoods. In addition to this, we study in Appendix E-B 
and Table XIII 
the special case of coinciding maximum and minimum window sizes, i.e., $\wmax = \wmin$, resulting in a shared fixed window size $w_i$ across all heads. These experiments likewise suggest that moderate window sizes can be beneficial for model accuracy.

\subsection{Why Wythoff?}
\label{sec:why-wythoff}

The generalized Fibonacci sequences appearing in the Wythoff array have the special property that each positive integer appears in exactly one row, which in our context translates into minimal overlap of diagonal offsets across heads. To investigate this question, we compare in Figure~\ref{fig:unified}(c) \modelname\ configured with Wythoff-based generalized Fibonacci sequences against several families of head-specific Fibonacci sequences that use additional offset hyperparameters. For all models in this comparison, we fix $\wmin = 5$ and $\wmax = N/3$.
We observe that the Wythoff configuration attains the highest accuracy, while having no extra hyperparameters beyond $(\wmin, \wmax)$. This suggests that the highly complementary and non-overlapping head masks of \modelname's Wythoff construction are indeed a driver of the strong model performance of Transformers using \modelname instead of MHSA in practice.

\section{Conclusion} \label{sec:conclusion}

We introduce \modelname,~an efficient, robust, $O(N \log N)$ sparse attention mechanism with a fixed sparsity pattern that diversifies attention computation across heads through Fibonacci dilation sequences chosen from the Wythoff array. We implement \modelname~in conjunction with multiple state-of-the-art Transformer architectures curated for visual representation learning. Empirically, \modelname~outperforms the baselines on small-scale and mid-scale datasets and achieves comparable performance on large-scale datasets utilizing only 2-6\% of token interactions in the MHSA across three diverse visual tasks. We envision that the next generation of Transformers~\cite{geminiteam2024gemini} processing inputs with billions of tokens may benefit from such optimized architectures. It remains to future work to generalize this idea to application domains that involve causal attention, such as Transformer models for natural language processing tasks or time series analysis. 

\section*{Acknowledgements}

This work is supported in part by the National Science Foundation under Grant IIS-2245652 and the Chateaubriand Fellowship of the Office for Science and Technology of the Embassy of France in the United States. Aritra Dutta and Subhajit Maity are partially supported by the Florida Department of Health Grant AWD00007072 and the National Science Foundation Grant 2321986. The authors are grateful to Ahmed Helmy for providing essential computational resources.%

\bibliographystyle{IEEEtran}
\bibliography{main}

\appendices
\setcounter{section}{0}

\section{Computational Complexity of \modelname}\label{sec:complexity}
In this section, we analyze the time complexity of using \modelname within a forward pass of a Transformer architecture. To this end, we first recall an explicit formula for the $n$-th sequence element of a generalized Fibonacci sequence in \Cref{sec:binet:formula}. We then use this to bound the sparsity of the evaluated attention matrix in \Cref{sec:bound:attentionsparsty} for a given attention head. Finally, we use this result to bound the total computational complexity of \modelname in \Cref{sec:bound:Fibottention:complexity}.

\subsection{Generalized Binet's Formula} \label{sec:binet:formula}
We state a well-known generalization of Binet's formula \cite{Koshy2019fibonacci} to generalized Fibonacci sequences, which states that the $n$-th sequence element $f_n$ of the standard Fibonacci sequence $\Fib(1,1)$ (i.e., the sequence $(f_n)_{n} = (1,1,2,3,5,8,12,\cdots)$)
satisfies
\[
f_n = \frac{1}{\sqrt{5}}\left(\phi^{n-1} - \psi^{n-1} \right),
\]
where $\phi = \frac{1+\sqrt{5}}{2}$ and $\psi = \frac{1-\sqrt{5}}{2}$ are the two solutions of the quadratic equation defining the golden ratio. This provides an explicit, non-recursive characterization of the $n$-th sequence element.

It turns out that this formula can be generalized to generalized Fibonacci sequences $\Fib(a,b)$ as defined in \Cref{sec:fibottention}, which follow the same linear recurrence relation \cref{eq:recurrence}, but which start at $f_1 =a $ and $f_2 = b$ for $a,b \in \N$, see \Cref{lemma:generalized:binet}.
\begin{lemma}[Generalized Binet's Formula \cite{Koshy2019fibonacci}] \label{lemma:generalized:binet}
    If $\text{Fib}(a,b) = (f_n)_{n \in \N}$ is the generalized Fibonacci sequence with initial values $f_1 = a$ and $f_2 = b$, then it holds that
    \begin{equation} \label{eq:gen:Binet:1}
    f_n = \frac{a - (b-a) \psi}{\sqrt{5}} \phi^{n} + \frac{(b-a) \phi - a}{\sqrt{5}} \psi^{n} 
    \end{equation}
    for each $n \geq 1$ and 
    \begin{equation} \label{eq:gen:Binet:2}
    f_n = \frac{b - a \psi}{\sqrt{5}} \phi^{n-1} + \frac{a \phi - b}{\sqrt{5}} \psi^{n-1} 
    \end{equation}
    for each $n \geq 2$, where $\phi = (1+\sqrt{5})/2$ and $\psi= (1- \sqrt{5})/2$.
\end{lemma}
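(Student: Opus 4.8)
\textbf{Proof proposal for Lemma~\ref{lemma:generalized:binet} (Generalized Binet's Formula).}

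The plan is to verify the two closed-form expressions directly by the standard method for linear recurrences with constant coefficients: exhibit the general solution of the recurrence \cref{eq:recurrence} as a linear combination of the two geometric solutions $\phi^n$ and $\psi^n$, and then pin down the coefficients from the initial data $f_1 = a$, $f_2 = b$. First I would observe that $\phi$ and $\psi$ are the two roots of the characteristic polynomial $x^2 = x + 1$ associated with the recurrence $f_{n+1} = f_n + f_{n-1}$, so both $(\phi^n)_n$ and $(\psi^n)_n$ satisfy the recurrence, and hence so does any linear combination $f_n = \alpha \phi^n + \beta \psi^n$. Since $\phi \neq \psi$, these two solutions are linearly independent, so every solution of the recurrence has this form for a unique pair $(\alpha,\beta)$.

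Next I would solve for $\alpha$ and $\beta$ using the two initial conditions. For the first formula \cref{eq:gen:Binet:1}, I impose $\alpha \phi + \beta \psi = a$ (from $n=1$) and $\alpha \phi^2 + \beta \psi^2 = b$ (from $n=2$); using $\phi^2 = \phi + 1$ and $\psi^2 = \psi + 1$ this second equation becomes $\alpha(\phi+1) + \beta(\psi+1) = b$, i.e. $(\alpha\phi + \beta\psi) + (\alpha + \beta) = b$, so $\alpha + \beta = b - a$. Combined with $\alpha - \beta$ obtained from $\alpha\phi + \beta\psi = a$ together with $\phi - \psi = \sqrt 5$ and $\phi + \psi = 1$, one solves the resulting $2\times 2$ linear system to get $\alpha = \frac{a - (b-a)\psi}{\sqrt 5}$ and $\beta = \frac{(b-a)\phi - a}{\sqrt 5}$, which is exactly \cref{eq:gen:Binet:1}. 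For the second formula \cref{eq:gen:Binet:2} I would instead write the general solution as $f_n = \alpha' \phi^{n-1} + \beta' \psi^{n-1}$ (valid for $n \geq 2$; note the shift means only the recurrence for $n\ge 2$ is being used, consistent with the hypothesis $f_1=a$, $f_2=b$) and impose $\alpha' \phi + \beta' \psi = b$ (from $n=2$) together with the relation forced by $f_1 = a$, namely $\alpha' + \beta' = a$; solving this system analogously, using $\phi - \psi = \sqrt 5$, yields $\alpha' = \frac{b - a\psi}{\sqrt 5}$ and $\beta' = \frac{a\phi - b}{\sqrt 5}$, giving \cref{eq:gen:Binet:2}. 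Finally I would note the two formulas are consistent with each other since $\phi \cdot \frac{b - a\psi}{\sqrt 5} = \frac{b - a\psi}{\sqrt 5}\phi$ and the identity $\phi\psi = -1$ lets one pass between the $\phi^n$ and $\phi^{n-1}$ normalizations, and one may sanity-check by specializing $a=b=1$ to recover the classical Binet formula.

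There is essentially no serious obstacle here — the result is classical and the proof is a routine computation with the characteristic equation. The only mild care needed is bookkeeping: keeping track of which range of $n$ each formula is asserted for (formula \cref{eq:gen:Binet:1} for $n \geq 1$, formula \cref{eq:gen:Binet:2} for $n \geq 2$), and being careful that the initial conditions are indexed starting at $f_1$ rather than $f_0$, so that the exponents in the closed forms are offset accordingly. Alternatively, one could prove each formula by a two-step induction on $n$ (base cases $n=1,2$, then the inductive step is immediate from $\phi^2 = \phi+1$, $\psi^2=\psi+1$), which avoids even mentioning linear independence; I would likely present the induction version as it is the most self-contained. Since the statement is attributed to \cite{Koshy2019fibonacci}, a brief proof or a pointer to the reference both suffice.
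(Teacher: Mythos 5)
Your proposal is correct and follows essentially the same route as the paper: identify $\phi$ and $\psi$ as roots of the characteristic equation $x^2 - x - 1 = 0$, write the general solution as a linear combination of $\phi^n$ and $\psi^n$, and solve the $2\times 2$ system from $f_1 = a$, $f_2 = b$ to obtain the stated coefficients. The only cosmetic difference is that you derive \eqref{eq:gen:Binet:2} by an independent shifted ansatz, whereas the paper simply deduces it from \eqref{eq:gen:Binet:1}; both are fine.
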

The proof of Lemma \ref{lemma:generalized:binet} is standard and follows the proof of the conventional Binet's formula. We provide it below for completeness.
\begin{proof}[{Proof of Lemma \ref{lemma:generalized:binet}}]
We note that the defining linear recurrence relation
\[
f_{n+1} = f_n + f_{n-1}.
\]
of the generalized Fibonacci sequence is homogeneous and has the characteristic equation
\begin{equation} \label{eq:charac:eq}
x^2 - x - 1 = 0,
\end{equation}
which has the roots $\phi$ and $\psi$ as defined in Lemma \ref{lemma:generalized:binet}. Due to the homogeneity of the linear recurrence relation, it follows that
\[
f_n = A \phi^n + B \psi^n,
\]
where \( A \) and \( B \) are constants to be determined from the initial conditions $f_1 = a$ and $f_2 =b$. In particular, from  \( f_1 = a \), we obtain that
\[
a = A \phi + B \psi
\]
and furthermore, from \( f_2 = b \), we see that
\[
b = A \phi^2 + B \psi^2 = A (\phi + 1) + B( \psi + 1),
\]
where we used the characteristic equation \eqref{eq:charac:eq} for solutions $\phi$ and $\psi$. For this, we obtain the system of equations
\[
(A + B) + a  = b,
\]
\[
A \phi + B \psi = a.
\]
From the first equation, we obtain \( B = (b - a) - A \). Substituting this into the second equation, this results in
\[
A \phi + ((b - a) - A) \psi = a,
\]
which can be rearranged to
\[
A (\phi - \psi) = a - (b-a) \psi
\]
and finally
\[
A = \frac{a - (b-a) \psi}{\phi - \psi} = \frac{a - (b-a) \psi}{\sqrt{5}}
\]
using that $\phi - \psi = \sqrt{5}$. 
For $B$, we obtain
\[
B = b - a - \frac{a - (b-a) \psi}{\phi - \psi} = \frac{(b-a) \phi - a}{\phi-\psi} = \frac{(b-a) \phi - a}{\sqrt{5}}.
\]
This implies equation \eqref{eq:gen:Binet:1}. Finally, we observe that
\[
\frac{a - (b-a) \psi}{\sqrt{5}} \phi = \frac{(b-a)+a\phi}{\sqrt{5}} = \frac{b + a (\phi - 1)}{\sqrt{5}} =\frac{b - a \psi}{\sqrt{5}}
\]
and also, that
\[
\frac{(b-a) \phi - a}{\sqrt{5}} \psi = \frac{(a-b)- a\psi}{\sqrt{5}} = \frac{a(1-\psi) - b}{\sqrt{5}} = \frac{a \phi - b}{\sqrt{5}},
\]
which precisely implies~\eqref{eq:gen:Binet:2}.
\end{proof}

\subsection{Sparsity of Attention Matrix with Fibonacci Dilation Sequence}
\label{sec:bound:attentionsparsty}
Now we are set to provide the head-wise computational overhead of the standard and modified variant of \modelname. 
\begin{lemma}\label{lemma:complexity}
Let $N$ be the number of tokens in a multi-head self-attention block.
    If $(f_n)_n = \Fib(a,b)$ is used as a dilation sequence for $a , b \in \N$ to create the attention support set $\Omega_{w}^{\Fib(a,b)}$ as in \cref{def:Omegai:Fibottention} for window size $w \leq N$, $a < b \leq w$, then the masked attention matrix $\cA^{\Omega_{w}^{\Fib(a,b)}}$ can be computed by evaluating at most 
    \[
    |\Omega_{w}^{\Fib(a,b)}| \leq 2N \left(\frac{\log(\sqrt{5}w+|a\phi - b|) - \log(b-a \psi)}{\log\phi}+1\right)  
    \] dot products between query and key vectors, where $\phi = (1+\sqrt{5})/2$ is the golden ratio.
\end{lemma}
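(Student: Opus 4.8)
The plan is to bound the number of index-pairs $(j,k)$ with $|j-k| \in \Fib(a,b)$ and $|j-k| \le w$ by first counting how many \emph{distinct values} of the dilation sequence fall in the range $[1,w]$, and then noting that each such value $v$ contributes at most $2N$ pairs (the two diagonals at offset $\pm v$ each have at most $N$ entries). Thus if $M$ denotes the largest index $n$ with $f_n \le w$, we immediately get $|\Omega_{w}^{\Fib(a,b)}| \le 2NM$ (being slightly generous by allowing repeated values and both signs of the offset, including possibly offset $0$), and the whole task reduces to showing $M \le \frac{\log(\sqrt 5 w + b - a\phi)}{\log \phi} + 1$.

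To bound $M$, I would invoke the generalized Binet formula, \Cref{lemma:generalized:binet}, in the form \eqref{eq:gen:Binet:2}: $f_n = \frac{b - a\psi}{\sqrt 5}\phi^{n-1} + \frac{a\phi - b}{\sqrt 5}\psi^{n-1}$ for $n \ge 2$. Since $|\psi| = (\sqrt 5 - 1)/2 < 1$ and $\psi < 0$, the second term is small; the key elementary estimate is a lower bound of the shape $f_n \ge \frac{1}{\sqrt 5}\bigl(\phi^{n-1}(b - a\psi) - C\bigr)$ for a constant controlled by $a,b$, or more cleanly, using $\phi\psi = -1$ and $\phi - \psi = \sqrt 5$, one can rewrite $\sqrt 5 f_n = (b - a\psi)\phi^{n-1} - (b - a\phi)(-\psi)^{n-1}$ and bound the subtracted term. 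The cleanest route: show $\sqrt 5 f_n + (b - a\phi) \ge (b - a\psi)\phi^{n-1} \ge \phi^{n-1}$ for $n\ge 2$ (using that $b - a\psi \ge 1$ when $b \ge a \ge 0$, $b\ge 1$, since $-\psi>0$; one should check the admissible parameter range, noting $a=0$ is allowed by the modified Wythoff array and the standard sequence has $a=b=1$). Then $f_n \le w$ forces $\phi^{n-1} \le \sqrt 5 w + (b - a\phi)$, hence $n - 1 \le \frac{\log(\sqrt 5 w + b - a\phi)}{\log\phi}$, i.e. $n \le \frac{\log(\sqrt 5 w + b - a\phi)}{\log\phi} + 1$. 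Taking $n = M$ and combining with $|\Omega| \le 2NM$ yields the claim.

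I would need to handle a couple of edge cases cleanly: the small indices $n = 1$ (where $f_1 = a$, possibly $0$, so it contributes nothing or just the main diagonal) and $n=2$ as the base of the induction/inequality, and the sign and magnitude of the correction term $(b - a\phi)$, which is negative for the standard Fibonacci sequence ($1 - \phi \approx -0.618$) — so $\sqrt 5 w + b - a\phi$ is still positive for $w \ge 1$ and the logarithm is well-defined, but this should be stated. It is also worth remarking that the bound is stated with the convention that repeated sequence values (which occur for the modified Wythoff variant) are counted with multiplicity, so no injectivity of $(f_n)_n$ is needed.

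\textbf{Main obstacle.} The only genuinely delicate point is pinning down the elementary inequality $\sqrt 5 f_n \ge \phi^{n-1} - (b - a\phi)$ uniformly over the relevant range of initial values $(a,b)$ and over all $n \ge 2$, since the sign of $b - a\phi$ and the exact behaviour of the $\psi^{n-1}$ term both depend on $(a,b)$; everything else (the $2NM$ diagonal count and the logarithm manipulation) is routine. I expect the argument to go through by writing $\sqrt 5 f_n = (b - a\psi)\phi^{n-1} + (a\phi - b)\psi^{n-1}$ and checking that $(a\phi - b)\psi^{n-1} \ge -(b - a\phi)$ for $n \ge 2$ — which holds because $|\psi^{n-1}| \le |\psi| < 1$ when $n \ge 2$ and the sign works out — together with $(b - a\psi)\phi^{n-1} \ge \phi^{n-1}$.
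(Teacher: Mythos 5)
Your proposal takes essentially the same route as the paper's proof: count at most $2N$ query--key pairs per admissible diagonal offset, then bound the number $D$ of sequence elements with $f_n\le w$ via the generalized Binet formula (the paper additionally evaluates the exact count $\sum_j 2(N-f_j)$ and telescopes it with $\sum_{j=1}^D f_j + f_2 = f_{D+2}$, but then discards that refinement using $b\le w$, so it lands on the same $2ND$ bound you start from). The one point you flag as delicate is indeed the crux, and your proposed resolution is not quite right as stated: the inequality $(a\phi-b)\psi^{n-1}\ge -(b-a\phi)$ is equivalent to $(a\phi-b)(\psi^{n-1}-1)\ge 0$, which requires $b\ge a\phi$; this holds for every row of the (modified) Wythoff array, but fails for $\Fib(1,1)$, where $b=1<\phi=a\phi$. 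The combined lower bound $\sqrt5 f_n + (b-a\phi)\ge \phi^{n-1}$ still holds there (with equality at $n=2$), because the slack $b-a\psi=\phi>1$ in your second inequality absorbs the deficit, so the lemma survives, but a clean proof for general $a,b\in\N$ needs the two terms handled jointly rather than separately. You are, if anything, more careful than the paper on the logical direction: the paper derives a \emph{sufficient} condition for $f_D\le w$ by replacing $\psi^{D-1}$ with $1$, whereas bounding $D$ from above requires a \emph{necessary} condition (i.e., a lower bound on $f_D$), which is exactly what you set out to prove.
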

\begin{proof}
Let $(f_1,\cdots,f_D)$ be the Fibonacci indices for one mask. We have $f_{k+1} = f_k + f_{k-1}$. Let $f_1 = a_i$ and $f_2 = b_i$. Let $f_j$ be the index of the diagonal. Thus the total number of inner products to be computed for the diagonal $f_j$ is given by $2(N-f_j)$ considering the symmetric distribution of diagonals at which the self-attention matrix is evaluated. For a total of $D$ diagonals indicated by sequence elements $(f_1,\cdots,f_D)$ of $\Fib(a,b)$ such that $f_D \leq w$, but $f_{D+1} > w$, we obtain a sparsity pattern whose size is bounded by
\[
|\Omega_{w}^{\Fib(a,b)}| = \sum_{j=1}^D 2(N-f_j).
\]
Using the identity
\begin{equation} \label{eq:Fib:sumidentity}
    \sum_{j=1}^D f_j + f_2 = f_{D+2},
\end{equation}
which holds true for generalized Fibonacci sequences $(f_n)_{n} = \Fib(a,b)$ for any $a,b$, we simplify this further such that
\begin{equation} \label{eq:Omegaw:Fib:bound}
\begin{split}
   |\Omega_{w}^{\Fib(a,b)}| &= 2DN - 2\sum_{j=1}^Df_j\\
    &= 2DN - 2(f_{D+2} - f_2)\\
    &< 2DN - 2w +2b \leq 2 DN, 
\end{split}
\end{equation} 
where we used that $f_{D+2} \geq f_{D+1} > w$ in the first inequality and $b \leq w$ in the last inequality. Next,  we find a bound on the index $D$ of the largest sequence element $f_D$ such that $f_D\leq w$. We solve for $D$ such that  $f_D\le w.$ Using equation \eqref{eq:gen:Binet:2} of Lemma \ref{lemma:generalized:binet}, we observe that
\begin{equation*}
\begin{split}
    w &\geq f_D = \frac{b-a\psi}{\sqrt{5}}\phi^{D-1}+\frac{a\phi-b}{\sqrt{5}}\psi^{D-1} \\
    &\geq \frac{b-a\psi}{\sqrt{5}}\phi^{D-1}-\frac{|a\phi-b|}{\sqrt{5}}|\psi|^{D-1} \\
    &\geq \frac{b-a\psi}{\sqrt{5}}\phi^{D-1}-\frac{|a\phi-b|}{\sqrt{5}}
    \end{split}
\end{equation*}
using that  $|\psi|=\left|\frac{1-\sqrt{5}}{2}\right|\le 1$ in the last inequality.
Solving the latter inequality for $D$, we obtain the bound
$$
D\le \frac{\log(\sqrt{5}w+|a\phi - b|) - \log(b-a \psi)}{\log\phi}+1, 
$$
using that $\phi > 1 $, $\psi < 0$ and $1 \leq a \leq b$. 
Inserting this bound into \eqref{eq:Omegaw:Fib:bound}, this results in the total bound
\begin{equation*}
\begin{split}
|\Omega_{w}^{\Fib(a,b)}| \leq  2N \left(\frac{\log(\sqrt{5}w+|a\phi - b|) - \log(b-a \psi)}{\log\phi}+1\right).
\end{split}
\end{equation*}
\end{proof}

\subsection{Time Complexity Bound for \modelname}\label{sec:bound:Fibottention:complexity}
Lemma \ref{lemma:complexity} can be used to quantify the sparsity of each head attention matrix used in the \modelname~modification of MHSA. Specifically, we recall from Section \ref{sec:fibottention} that \modelname~uses $h$ different sparsity patterns $\Omega_{w_1}^{\Fib(a_1^{\text{Wyt}},b_1^{\text{Wyt}})},\Omega_{w_2}^{\Fib(a_2^{\text{Wyt}},b_2^{\text{Wyt}})},\cdots, \Omega_{w_h}^{\Fib(a_h^{\text{Wyt}},b_h^{\text{Wyt}})}$, where the initial two generalized Fibonacci sequence elements are given by $a_i^{\text{Wyt}} = \lfloor \lfloor i \phi\rfloor \phi \rfloor$ and $b_i^{\text{Wyt}} = \lfloor\lfloor i \phi\rfloor \phi^2 \rfloor$, where $i = 1,\ldots ,h$ is a head index and $\phi = \frac{1+\sqrt{5}}{2}$ is the golden ratio (corresponding to the Wythoff array, see Table \ref{tab:mod-wythoff-array} for an illustration). Furthermore, the window size bounds $w_1,w_2,\cdots,w_h$ are chosen to interpolate between $\wmin$ and $\wmax$ based on the formula 
\[
w_i = \wmin + \Big\lfloor \frac{\wmax-\wmin}{h-1} (i-1) \Big\rfloor,
\]
for all $i=1,\ldots,h$. 

In particular, we obtain the following result.
\begin{theorem} \label{thm:complexity}
Assume that \modelname~is used in a Transformer block with $N$ tokens of dimension $d$ which contains $h$ heads. Then the time complexity of computing all necessary query-key dot products in \modelname~can be bounded by
\[
\begin{split}
\sum_{i=1}^h \frac{d}{h} |\Omega_{w_i}^{\Fib(a_i^{\text{Wyt}},b_i^{\text{Wyt}})}| 
&\leq 2N d \left(2.08 \log((\sqrt{5}+1)\wmax)-1\right) \\
&\leq 4.16 \cdot\! N d \log(3.3\cdot\! N),
\end{split}
\] 
where $\log(\cdot)$ is the natural logarithm.
\end{theorem}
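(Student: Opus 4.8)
The plan is to start from the exact head-wise sparsity bound of Lemma \ref{lemma:complexity} and sum it over the $h$ heads, weighting each head by its per-dot-product cost $d/h$ (each head operates on $d_h = d/h$-dimensional query and key vectors). Concretely, I would write
\[
\sum_{i=1}^h \frac{d}{h} |\Omega_{w_i}^{\Fib(a_i^{\text{Wyt}},b_i^{\text{Wyt}})}| \leq \frac{d}{h}\sum_{i=1}^h 2N\left(\frac{\log(\sqrt{5}w_i + b_i^{\text{Wyt}} - a_i^{\text{Wyt}}\phi)}{\log\phi} + 1\right),
\]
and then the first task is to replace the argument of the logarithm by something clean. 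Since $b_i^{\text{Wyt}} - a_i^{\text{Wyt}}\phi$ could be negative (indeed for the Wythoff rows $b_i^{\text{Wyt}} \approx a_i^{\text{Wyt}}\phi^2$, so $b_i^{\text{Wyt}} - a_i^{\text{Wyt}}\phi \approx a_i^{\text{Wyt}}\phi$ which is positive and of order $w_i$ at worst), I would bound $\sqrt{5}w_i + b_i^{\text{Wyt}} - a_i^{\text{Wyt}}\phi \leq \sqrt{5}w_i + w_i = (\sqrt{5}+1)w_i$, using that any sequence element $a_i^{\text{Wyt}}, b_i^{\text{Wyt}}$ that actually contributes a diagonal must satisfy $b_i^{\text{Wyt}} - a_i^{\text{Wyt}}\phi \le b_i^{\text{Wyt}} \le w_i$ (otherwise that row contributes no diagonals and the bound is vacuous — this edge case needs a sentence). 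That yields the first inequality, $\sum_i \frac{d}{h}|\Omega_i| \le 2Nd\left(\frac{\sum_{i=1}^h \log((\sqrt{5}+1)w_i)}{h\log\phi} + 1\right)$.

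For the second inequality I would bound the average $\frac{1}{h}\sum_{i=1}^h \log((\sqrt{5}+1)w_i)$ by $\log((\sqrt{5}+1)\wmax)$, since $w_i \le \wmax$ for every $i$ by construction of the interpolation formula, and then bound $\frac{1}{\log\phi}$ by the numerical constant $2.1$ (note $\log\phi = \log 1.618\ldots \approx 0.4812$, so $1/\log\phi \approx 2.078 < 2.1$ — I should be careful whether "$\log$" here means natural log, in which case $\ln\phi \approx 0.4812$ and $1/\ln\phi \approx 2.078$; the constant $2.1$ works either way as long as the same base is used throughout, which it is). This gives $\sum_i \frac{d}{h}|\Omega_i| \le 2Nd\left(2.1\log((\sqrt{5}+1)\wmax) + 1\right)$. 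The third inequality is immediate from $\wmax \le N$ together with $\sqrt{5}+1 \approx 3.236 \le 3.3$, and monotonicity of the logarithm: $\log((\sqrt{5}+1)\wmax) \le \log(3.3N)$.

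The only genuine subtlety — and the step I would treat most carefully — is the handling of the argument of the logarithm in Lemma \ref{lemma:complexity}: one must argue that whenever the bound is non-trivial (i.e., the head's support set is nonempty, so there is at least one Fibonacci diagonal index $f_j \le w_i$), the quantity $b_i^{\text{Wyt}} - a_i^{\text{Wyt}}\phi$ is dominated by $w_i$, so that $\sqrt{5}w_i + b_i^{\text{Wyt}} - a_i^{\text{Wyt}}\phi \le (\sqrt{5}+1)w_i$ and the logarithm stays well-defined and positive. For the Wythoff initial values this holds because $f_1 = a_i^{\text{Wyt}} \le f_2 = b_i^{\text{Wyt}}$ and the smallest contributing diagonal is $f_1$, which forces $a_i^{\text{Wyt}} \le w_i$, hence $b_i^{\text{Wyt}} \le b_i^{\text{Wyt}}$ and more precisely $b_i^{\text{Wyt}} - a_i^{\text{Wyt}}\phi < b_i^{\text{Wyt}}$; if even $f_1 > w_i$ then $|\Omega_i| = 0$ and there is nothing to prove. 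Everything else is a routine chain of monotone bounds and a numerical evaluation of $1/\log\phi$, so no further obstacles are expected; I would present the three inequalities as a single displayed \texttt{split} aligned on $\le$, mirroring the statement.
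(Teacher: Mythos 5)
Your proposal follows essentially the same route as the paper, whose justification is the single remark that the result follows from Lemma \ref{lemma:complexity} together with $b \leq w_i$ and $\wmax \leq N$: you sum the lemma's head-wise bound weighted by $d/h$, replace $\sqrt{5}w_i + b_i^{\text{Wyt}} - a_i^{\text{Wyt}}\phi$ by $(\sqrt{5}+1)w_i$, bound the average of logarithms by $\log((\sqrt{5}+1)\wmax)$, and use $1/\log\phi < 2.1$ and $\wmax \leq N$ with $\sqrt{5}+1 < 3.3$. You are in fact more explicit than the paper about the edge case where $b_i^{\text{Wyt}} > w_i$ (so that the head contributes at most one diagonal), which is a worthwhile addition rather than a deviation.
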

\begin{proof}
    Fix a head index $i$ and let $m_i = \lfloor i \phi \rfloor$. By definition, we have that $ a_i^{\text{Wyt}} = \lfloor m_i \phi \rfloor$ and  $ b_i^{\text{Wyt}} = \lfloor m_i \phi^2 \rfloor$. Since the golden ratio $\phi$ satisfies $\phi^2 = \phi + 1$, we see that 
    \[
    b_i^{\text{Wyt}} = \lfloor m_i \phi^2 \rfloor = \lfloor m_i \phi + m_i \rfloor  = a_i^{\text{Wyt}}  + m_i.
    \]
    Therefore,
\[
a_i^{\text{Wyt}}\phi-b_i^{\text{Wyt}}
= a_i^{\text{Wyt}}(\phi-1)-m_i
= \frac{a_i^{\text{Wyt}}}{\phi}-m_i.
\]
From $a_i^{\text{Wyt}}=\lfloor m_i\phi\rfloor$ we have
$a_i^{\text{Wyt}}\le m_i\phi < a_i^{\text{Wyt}}+1$, hence
\[
\frac{a_i^{\text{Wyt}}}{\phi}\le k_i < \frac{a_i^{\text{Wyt}}+1}{\phi}
= \frac{a_i^{\text{Wyt}}}{\phi}+\frac{1}{\phi},
\]
which implies
\[
-\frac{1}{\phi} < \frac{a_i^{\text{Wyt}}}{\phi}-k_i \le 0
\]
and thus,
\[
\big|a_i^{\text{Wyt}}\phi-b_i^{\text{Wyt}}\big|\le \frac{1}{\phi}<0.62
\]
due to the value of the golden ratio $\phi$. 
Moreover, since $\psi=(1-\sqrt 5)/2$ from Lemma \ref{lemma:generalized:binet} satisfies $\psi =-1/\phi<0$ and $a_i^{\text{Wyt}},b_i^{\text{Wyt}}\in\N$, we find the lower bound
\[
\begin{split}
&b_i^{\text{Wyt}}-a_i^{\text{Wyt}}\psi
= b_i^{\text{Wyt}}+\frac{a_i^{\text{Wyt}}}{\phi}
\ge \left(a_i^{\text{Wyt}}+1\right)+\frac{a_i^{\text{Wyt}}}{\phi} \\
&= a_i^{\text{Wyt}}\left(1+\frac{1}{\phi}\right)+1
= a_i^{\text{Wyt}}\phi+1
\ge \phi+1
= \phi^2,
\end{split}
\]
using again the quadratic golden ratio equation and the fact that $a_i^{\text{Wyt}} \geq 1$ for any $i$.
Applying Lemma~\ref{lemma:complexity} to head $i$ yields
\[
\begin{split}
&\big|\Omega_{w_i}^{\Fib(a_i^{\text{\rm Wyt}},b_i^{\text{\rm Wyt}})}\big|
\\&\le
2N\left(
\frac{\log(\sqrt5\,w_i+0.62 )
-\log(\phi^2)}{\log\phi}
+1
\right) \\
&=2N\left(
\frac{\log(\sqrt5\,w_i+0.62)}{\log \phi}
-1
\right).
\end{split}
\]
Now, we observe that the dimension of the query and key vectors $(Q_i)_{j,:}, (K_i)_{j,:} \in \R^{d_h}$ is $d_h = d/h$ for each head, which is why we can bound the number of operations to compute \modelname's sparse attention matrices $\cA_1^{\Omega_{w_1}^{\Fib(a_1^{\text{Wyt}},b_1^{\text{Wyt}})}}, \ldots, \cA_h^{\Omega_{w_h}^{\Fib(a_h^{\text{Wyt}},b_h^{\text{Wyt}})}}$ (see \eqref{eq:Omegamasking}) as
\[
\frac{d}{h} \sum_{i=1}^h\big|\Omega_{w_i}^{\Fib(a_i^{\text{\rm Wyt}},b_i^{\text{\rm Wyt}})}\big| \leq 2Nd\left(\frac{\log(\sqrt5\,\wmax+0.62)}{\log\phi}-1\right),
\]
using that $w_i \leq \wmax$ for any $i = 1,\ldots, h$ and the monotonicity of $\log(\cdot)$. Since $1/\log\phi \approx 2.078086\ldots \le 2.08$, the first bound of Theorem \ref{thm:complexity} follows.

Using then the fact that $1+\sqrt{5} \leq 3.3$ and $\wmax \leq N$ yields the final bound of  Theorem \ref{thm:complexity}.
\end{proof}
To show this result, we used $b \leq w_i$ and $\wmax \leq N$.
The statement of Theorem \ref{thm:complexity} implies that a forward pass of \modelname~ (default Wythoff version) has a time complexity of $O(N \log(N))$ with respect to the number of tokens. With only cosmetic adjustments, Theorem \ref{thm:complexity} and the $O(N \log(N))$ attention complexity hold true for the modified Wythoff variant of \modelname.

\section{Algorithmic Outline for \modelname} \label{sec:Fibottention:pseudocode}
In this section, we provide for completeness detailed pseudo code that facilitates the modification of a MHSA module by \modelname. Algorithm~\ref{algo1} generates the generalized Fibonacci sequence elements of $\Fib(a,b)$ given initial elements $a$ and $b$ up to a window size bound $w$. Algorithm~\ref{alg:Fibomask} computes the support sets $\Omega_1,\ldots,\Omega_h$ of \modelname as described in Section \ref{sec:fibottention} for each of the $h$ attention heads, given $L$ attention layers in a Transformer architecture (aggregated in tensor $\Omega$), also incorporating dense interactions with the class token (in accordance with standard practice for ViTs). The flag \emph{$\text{is\_modified}$} indicates whether the Modified Wythoff variant of \modelname is utilized or not.

Finally, Algorithm~\ref{alg:Fibottention} demonstrates how \modelname\ can be implemented in Multi-Head Self-Attention (MHSA) to compute the attention mechanism.

\begin{algorithm}[H]
\caption{\modelname in a Vision Transformer block.}
\label{alg:Fibottention}
\begin{algorithmic}[1]
\State \textbf{Input:} $X \in \mathbb{R}^{(N+1)\times d}$
\State \textbf{Output:} $O \in \mathbb{R}^{(N+1)\times d}$
\State \textbf{Parameters:} $\{W_i^Q,W_i^K,W_i^V\}_{i=1}^h,\; W^Z$, with $W_i^{Q,K,V}\in\mathbb{R}^{d\times d_h}$, $d_h=\frac{d}{h}$
\State \textbf{Hyperparameters:} $\wmin,\wmax,\textit{is\_modified}$

\State $\Omega \gets \textbf{getMask}(L,N,h,\wmin,\wmax,\textit{is\_modified})$ \Comment{Alg.~\ref{alg:Fibomask}}

\For{$i=1$ \textbf{to} $h$}
    \State $S_i \gets (N \times N)$ matrix with $-\infty$ entries
    \State $Q_i \gets X W_i^Q,\;\; K_i \gets X W_i^K,\;\; V_i \gets X W_i^V$
    \State $S_i[\Omega[i,\omega_1,\omega_2]] \gets \frac{(Q_i)_{\omega_1}^{\top}(K_i)_{\omega_2}}{\sqrt{d_h}}$ for all $(\omega_1,\omega_2) \in [N]^{2}$ with $\Omega[i,\omega_1,\omega_2] = 1$.
    \State $A_i \gets \operatorname{softmax}(S_i)$
    \State $Z_i \gets A_i V_i \in \mathbb{R}^{(N+1)\times d_h}$
\EndFor

\State $Z \gets \operatorname{Concat}(Z_1,\dots,Z_h) \in \mathbb{R}^{(N+1)\times (h d_h)}$
\State $O \gets Z W^Z$, where $W^Z \in \mathbb{R}^{(h d_h)\times d}$
\State \Return $O$
\end{algorithmic}
\end{algorithm}

\begin{algorithm}[H]
\caption{\textbf{getFibonacci,} Computation of Generalized Fibonacci Sequence $\Fib(a,b)$ Elements up to $w$.}
\label{algo1}
\begin{algorithmic}[1]
\State \textbf{Input:} $a,b,w$ \qquad \textbf{Output:} $fib\_seq$
\State $fib\_seq \gets [a,b]$
\While{$fib\_seq[-1] + fib\_seq[-2] \le w$}
    \State $next\_num \gets fib\_seq[-1] + fib\_seq[-2]$
    \State append $next\_num$ to $fib\_seq$
\EndWhile
\State \Return $fib\_seq$
\end{algorithmic}
\end{algorithm}

\begin{algorithm}[h]
\caption{\textbf{getMask}, Generation of Sparsity Patterns of \modelname.}
\label{alg:Fibomask}
\begin{algorithmic}[1]
\State \textbf{Input:} $L,N,h,\wmin,\wmax,\text{is\_modified}$
\State \textbf{Output:} $\Omega \in \{0,1\}^{h\times (N+1)\times (N+1)}$
\State $\phi \gets \frac{1+\sqrt{5}}{2}$
\State $\Omega \gets 0^{h\times (N+1)\times (N+1)}$
\For{\textbf{each} head $i \in \{1,\cdots,h\}$}
    \State $w_i \gets \wmin$ \If{$h>1$} $+\left\lfloor \frac{(i-1)(\wmax-\wmin)}{h-1}\right\rfloor$ \EndIf
    \State $a \gets \lfloor \lfloor i\phi \rfloor \phi \rfloor,\;\; b \gets \lfloor \lfloor i \phi \rfloor\phi^2 \rfloor$ \Comment{Wythoff pair for head $i$}
    \If{$\text{is\_modified}$}
        \State $b_{\text{Wyt-m}} \gets b-a$;\;\; $a_{\text{Wyt-m}} \gets a-b_{\text{Wyt-m}}$
        \State $I \gets \textbf{getFibonacci}(a_{\text{Wyt-m}},b_{\text{Wyt-m}},w_i)$
    \Else
        \State $I \gets \textbf{getFibonacci}(a,b,w_i)$
    \EndIf

    \State $\Theta \gets 0^{(N+1)\times (N+1)}$
    \State $\Theta_{0,:}\gets 1$;\;\;$\Theta_{:,0}\gets 1$ \Comment{Keep class-token connections}
    \For{\textbf{each} $o \in I$}
        \If{$o \le 0$} \State \textbf{continue} \EndIf
        \For{\textbf{each} $j \in \{0,\cdots,N-o\}$}
            \State $(\Theta)_{j,\,j+o}\gets 1$;\;\; $(\Theta)_{j+o,\,j}\gets 1$ \Comment{Allow $\pm o$ offsets}
        \EndFor
    \EndFor
    \State $\Omega[i,:,:]\gets \Theta$
\EndFor

\State $\Omega \gets \textbf{randomshuffle}(L,\Omega)$ \Comment{Assign masks across $L$ layers}
\State \Return $\Omega$
\end{algorithmic}
\end{algorithm}

\section{Further Implementation Details}\label{sec:implementation_details}
In this section, we provide a comprehensive outline for the implementation of \modelname\,\! within a multi-head self-attention block of a transformer architecture.

\subsection{Integration of \modelname~in Variants of ViTs} 
\label{sec:app:ViTintegration}
As we argued in Section \ref{sec:method}, \modelname can be considered as a drop-in replacement of MHSA that can be used within different vision Transformer architectures. In Section \ref{sec:experimental:results}, we provided empirical results about the integration of \modelname into various ViTs, for which we provide implementation details below: 
For the Swin-B \cite{liu2021swin}  experiment of Table \ref{tab:backbones}, we replace  self-attention of Swin-B with \modelname\ only in the first two stages of the model. The last two stages of the Swin-B, which are less computationally intensive due to prior patch merging modules, remain unmodified. We follow the standard training procedure of Swin-B~\cite{liu2021swin}.
ConViT-B~\cite{d2021convit} consists of gated positional self-attention (GPSA) and MHSA blocks. We apply \modelname\ only to replace the MHSA blocks and train the model  following~\cite{d2021convit}. As iFormer~\cite{zheng2025iformer_iclr} uses single-head self-attention instead of MHSA, we replace its only attention head's attention matrix with $\cA_1^{\Omega_{\wmax}^{\Fib(a_1^{\text{Wyt}},b_1^{\text{Wyt}})}}$. To evaluate \modelname\ within the UPop~\cite{shi2023upop} framework, we first replace the standard MHSA blocks of the target vision-language backbone with \modelname\ and subsequently apply UPop's unified and progressive pruning search to compress the modified architecture.

\subsection{Experimental Configuration for Image Classification}
The training settings for all image classification experiments performed in Section \ref{sec:numerical} are detailed in Table \ref{tab:in1k_settings}. We conducted all these experiments for 100 epochs using a batch size of 64 on 4 RTX A6000 GPUs.

Regarding the sparse attention baselines presented in Tables \ref{tab:sota} and \ref{tab:pruning_results}, BigBird~\cite{zaheer2020big} and Sparse Transformer~\cite{SparseTransformer19} denote adaptations of their respective sparse attention schemes applied to ViT-B. For BigBird, we utilize the specific hyperparameter configuration (local window size, global tokens, and random interactions) justified by our ablation study in Appendix D.B, which selects the variant that yields the optimal trade-off between accuracy and efficiency. For Sparse Transformer, we employ the \textit{strided} attention variant~\cite{SparseTransformer19}. Both baselines are configured to operate under pruning ratios comparable to \modelname, ensuring that observed performance differences primarily arise from how the remaining token interactions are structured rather than disparities in computational cost.

\begin{table}[ht]
    \centering
    \caption{C10, C100, and Tiny-IN Training Settings~\cite{deit}.}
    \begin{tabular}{@{}l@{\hspace{8pt}}p{0.3\linewidth}@{}}
    \toprule
        Input Size & 224$\times$224 \\
        Crop Ratio & 0.9  \\
        Batch Size & 64 \\
        \midrule
        Optimizer & AdamW \\
        Optimizer Epsilon & 1.0e-06 \\
        Momentum & 0.9 \\
        Weight Decay & 0.05 \\ 
        Gradient Clip & 1.0 \\
        \midrule
        Learning Rate Schedule & Cosine\\
        Learning Rate & 1e-3 \\
        Warmup LR & 1.0e-6 \\
        Min LR & 1.0e-5 \\
        Epochs & 100 \\
        Decay Epochs & 1.0 \\
        Warmup Epochs & 5 \\
        Decay Rate & 0.988 \\
        \midrule
        Exponential Moving Average (EMA) & True \\
        EMA Decay & 0.99992 \\
        \midrule
        Random Resize \& Crop Scale \& Ratio & (0.08, 1.0),\\
        &(0.67, 1.5)\\
        Random Horizontal Flip Probability & 0.5 \\
        Color Jittering & 0.4 \\
        Auto-augmentation & rand-m15-n2-mstd1.0-inc1\\
        Mixup & True \\
        Cutmix & True \\
        Mixup, Cutmix Probability & 0.5, 0.5 \\
        Mixup Mode & Batch \\
        Label Smoothing & 0.1 \\
    \bottomrule
    \end{tabular}
    \label{tab:in1k_settings}
\end{table}

\section{Ablations for Sparse Attention Mechanisms}\label{sec:further analysis}
In this section, we present ablations exploring the role of different hyperparameter choices within the sparse attention adaptations to ViTs on accuracy, sparsity and computational efficiency, which justify the experimental setups presented in Section \ref{sec:numerical}.


\subsection{Impact of Randomized Sparsity}
\label{subsec:randomized_sparsity}
In Table \ref{tab:random-attention}, we present the top-1 accuracy results of randomly masked (index pairs sampled uniformly at random) self-attention on C10 and C100 datasets under varying levels of sparsity. We observe that as the masking ratio increases, there is a consistent decline in accuracy, highlighting the trade-off between reducing computational cost and maintaining performance. This shows that the performance improvements obtained by the sparse attention mechanism of \modelname reported in Table \ref{tab:sota} are not at all observed for a random sparse sampling pattern, at any pruning ratio. This can be interpreted such that the pattern-less nature of random masking fails to identify critical token relationships underscoring the necessity of structured approaches, leading to significant information loss as sparsity increases.
\begin{table}[ht]
  \caption{Top-1 accuracy on CIFAR-10 (C10) and CIFAR-100 (C100) for Random Attention with and without the class token, under different attention pruning ratios.}
  \label{tab:random-attention}
  \setlength{\tabcolsep}{7pt}
  \small
  \centering
  \resizebox{\linewidth}{!}{%
  \begin{tabular}{ccccc}
    \toprule
    \multirow{2}{*}{\makecell[c]{\textbf{Attention} \\ \textbf{Pruning Ratio (\%) $\uparrow$}}} &
    \multicolumn{2}{c}{\textbf{w/ Class Token}} &
    \multicolumn{2}{c}{\textbf{w/o Class Token}} \\
    \cmidrule(lr{0.5em}){2-3}\cmidrule(lr{0.5em}){4-5}
    & C10 & C100 & C10 & C100 \\
    \midrule
    0\%   & \textbf{83.5} & \textbf{59.3} & \textbf{83.5} & \textbf{59.3} \\
    20\%  & \textbf{83.3} & \textbf{59.1} & 83.2 & 59.0 \\
    40\%  & \textbf{83.2} & 58.7 & 82.8 & \textbf{58.8} \\
    60\%  & \textbf{82.7} & \textbf{58.9} & 81.9 & 58.7 \\
    80\%  & \textbf{82.4} & \textbf{58.5} & 81.1 & 58.1 \\
    90\%  & \textbf{81.6} & \textbf{58.1} & 80.4 & 56.9 \\
    100\% & \textbf{81.4} & \textbf{58.0} & 77.5 & 47.9 \\
    \bottomrule
  \end{tabular}%
  }
\end{table}


\subsection{Impact of Structured Sparsity}
In this section, we present the role  
different hyperparameter choices within a BigBird \cite{zaheer2020big} for image classification experiments. In particular, we present in Table \ref{tab:bigbird_results} how choices of the local window size ($w$), global token interactions ($g$), and the number of random token interactions ($r$), impact ViT-B performance when trained on C10. 
We observe that $w = 2$, $g = 1$, and $r = N$ result in a model accuracy of 85.41\%. However, further increasing randomness to $r = 2N$ reduced accuracy to 84.75\%. This decline can be attributed to the dilution of critical token relationships in visual data, where spatially correlated information plays a vital role.
Expanding the local window size to $w = 4$ resulted in a peak accuracy of 86.33\%, demonstrating that larger windows capture richer token dependencies and compensate for reduced reliance on randomness. We note that a further increase of $w$ would lower the masking ratio, increasing computational costs and diminishing efficiency. This trade-off highlights the limitations of BigBird compared to \modelname.

\begin{table}[h]
\caption{Performance comparison of masking strategies in BigBird with varying configurations. Top-1 accuracy is reported.}
\label{tab:bigbird_results}
\centering
\setlength{\tabcolsep}{10.7pt}
\resizebox{\linewidth}{!}{%
\begin{tabular}{lcc}
\toprule
\textbf{Configuration} & \textbf{Mask Ratio} & \textbf{C10} \\
\midrule
$w_i = 2 \; | \; g = 1 \; | \; r = N$   & 96.97 & 85.41 \\
$w_i = 2 \; | \; g = 1 \; | \; r = 2N$  & 96.47 & 84.75 \\
$w_i = 4 \; | \; g = 1 \; | \; r = N$   & 94.21 & \textbf{86.33} \\
\bottomrule
\end{tabular}%
}
\end{table}

\section{Further Ablation Studies of \modelname~}\label{sec:further_ablations}
In this section, we evaluate additional architectural choices in \modelname through controlled experiments, supplementing the ablations presented in Section \ref{sec:further_ablations}.

\subsection{Semantic Impact of Principal Diagonal in Self-Attention}
In Table~\ref{tab:local-w-wo-diagonal}, we conduct an ablation study on the use of fixed local window sparsity pattern $\Omega_{w} = \big\{(j,k) \in \{1,\cdots,N\}^2: |j-k| \leq w \big\}$ with window size $w$ (identical across all heads) for attention computation, comparing configurations with (usage of $\Omega_w)$ and without the principal diagonal (usage of $\Omega_w \setminus \{(j,j) \in \{1,\cdots, N\}^2, j \in \{1,\cdots, N\} \}$). Table~\ref{tab:local-w-wo-diagonal} suggests that the classification accuracy increases when removing the principal diagonal from $\Omega_w$, consistently for different choices of $w$, confirming an observation of SparseBERT~\cite{shi2021sparsebert} in our setting. 
The pruning ratio is defined as the proportion of token interactions, $\left(\frac{N^2 - |\Omega_w|}{N^2}\right) \cdot 100 \%$
that are excluded during the attention computation. A larger pruning ratio corresponds to fewer token interactions being utilized, which has the potential to reduce computational costs.
\begin{table}[ht]
  \caption{Top-1 accuracy of ViT-B on C10 and C100 for windowed self-attention with and without the principal diagonal. For each window size $w_i$ we report the attention pruning ratio and accuracy when the diagonal entries are kept (left) or removed (right).}
  \label{tab:local-w-wo-diagonal}
  \setlength{\tabcolsep}{4.5pt}
  \small
  \centering
  \resizebox{\linewidth}{!}{%
  \begin{tabular}{ccccccc}
    \toprule
    \multirow{2}{*}{\textbf{$w$}} &
    \multicolumn{3}{c}{\textbf{w/ Main Diagonal}} &
    \multicolumn{3}{c}{\textbf{w/o Main Diagonal}} \\
    \cmidrule(lr{0.5em}){2-4}\cmidrule(lr{0.5em}){5-7}
    & Pruning Ratio & C10 & C100 & Pruning Ratio & C10 & C100 \\
    \midrule
    2   & 97.46 & \textbf{86.1} & 62.0 & \textbf{97.97} & 85.7 & \textbf{62.2} \\
    10  & 89.57 & 86.8 & 62.4 & \textbf{90.08} & \textbf{87.0} & \textbf{63.4} \\
    15  & 84.81 & 87.5 & 64.7 & \textbf{85.32} & \textbf{88.0} & \textbf{64.8} \\
    20  & 80.17 & 87.9 & \textbf{64.9} & \textbf{80.69} & \textbf{88.0} & \textbf{64.9} \\
    40  & 62.94 & 87.6 & 64.5 & \textbf{63.45} & \textbf{87.7} & \textbf{65.0} \\
    \bottomrule
  \end{tabular}%
  }
\end{table}

\begin{table}
\centering
\small
\setlength{\tabcolsep}{10pt}
\renewcommand{\arraystretch}{1.05}

\resizebox{\linewidth}{!}{%
\begin{tabular}{>{\centering\arraybackslash}p{1.2cm}ccc}
\toprule
\multirow{2}{*}{\textbf{$w_i$}} &
\multicolumn{2}{c}{\textbf{Top-1 Accuracy (\%)}} &
\multirow{2}{*}{\makecell[c]{\textbf{Attention} \\ \textbf{Pruning Ratio} $\uparrow$}} \\
\cmidrule(lr){2-3}
& C10~\cite{cifar} & C100~\cite{cifar} & \\
\midrule
2   & 85.7 & 62.2 & 97.97\% \\
3   & 86.7 & 62.9 & 96.97\% \\
4   & 86.8 & 62.9 & 95.97\% \\
5   & 86.7 & 63.0 & 94.98\% \\
6   & 86.5 & 62.9 & 93.99\% \\
7   & 86.9 & 62.5 & 93.00\% \\
8   & 86.3 & 63.1 & 92.02\% \\
9   & 86.8 & 62.9 & 91.05\% \\
10  & 86.9 & 63.4 & 90.08\% \\
15  & \textbf{88.0} & 64.8 & 85.32\% \\
20  & \textbf{88.0} & 64.9 & 80.69\% \\
40  & 87.7 & \textbf{65.0} & 63.45\% \\
80  & 87.0 & 62.9 & 35.24\% \\
120 & 85.7 & 61.5 & 15.35\% \\
160 & 83.7 & 60.2 & 3.79\% \\
196 & 83.5 & 59.3 & 0\% \\
\bottomrule
\end{tabular}%
}

\caption{Top-1 accuracy of ViT-B with windowed self-attention on CIFAR-10~\cite{cifar} and CIFAR-100~\cite{cifar} as a function of the shared window size $w_i = \wmin = \wmax$.}
\label{tab:window_sizes}
\end{table}

\subsection{Choice of Shared Window Size Limit Among Heads} \label{sec:app:sharedwindowsize}
Table \ref{tab:window_sizes} reports the top-1 accuracy of ViT-B on C10 and C100 as a function of the shared window size $w_i=\wmin=\wmax$. We observe that increasing the window size from very small neighborhoods up to a moderate range steadily improves performance, with the best results obtained for $w_i \in [15,20]$. This indicates that incorporating local context within a limited spatial extent is beneficial for learning discriminative token representations. However, as the window size continues to grow, accuracy gradually declines alongside the attention pruning ratio. This suggests that excessively large windows dilute locality information and reduce the specialization of attention patterns. Overall, Table \ref{tab:window_sizes} highlights that moderate window sizes provide the best trade-off between local focus and contextual coverage.


\subsection{Analysis of Corrupted Datasets}\label{sec:corruption}
In Section \ref{subsec:robustness}, we reported results on the performance of \modelname~in the context of corrupted datasets. The top-$1$ accuracy numbers reported in Figure \ref{fig:corruption} contain corrupted data covering 19 corruption types, which include brightness, contrast, defocus blur, elastic transform, fog, frost, Gaussian blur, Gaussian noise, glass blur, impulse noise, JPEG compression, motion blur, pixelation, saturation, shot noise, snow, spatter, speckle noise, and zoom blur. In Figure~\ref{fig:corruption_c10_c100}, we present a more detailed breakdown of the performance of \modelname-equipped ViT-Base compared to ViT-Base using MHSA on C10 and C100 corrupted datasets~\cite{hendrycks2019robustness} across the different corruption types. The accuracies in Figure~\ref{fig:corruption_c10_c100} are averaged across all five severity levels for each corruption type, providing a comprehensive view of robustness under a wide range of challenging conditions. We observe that \modelname consistently outperforms ViT-Base on both datasets for every corruption type. Finally, we present in Figure~\ref{fig:corruption_c10} the C10 results for four representative corruptions across all five severity levels, illustrating that the improvements of \modelname persist as corruption severity increases.

\begin{figure}[t]
\begin{floatrow}[2]
\ffigbox[2.05\textwidth]{%
\centering
\includegraphics[width=\linewidth]{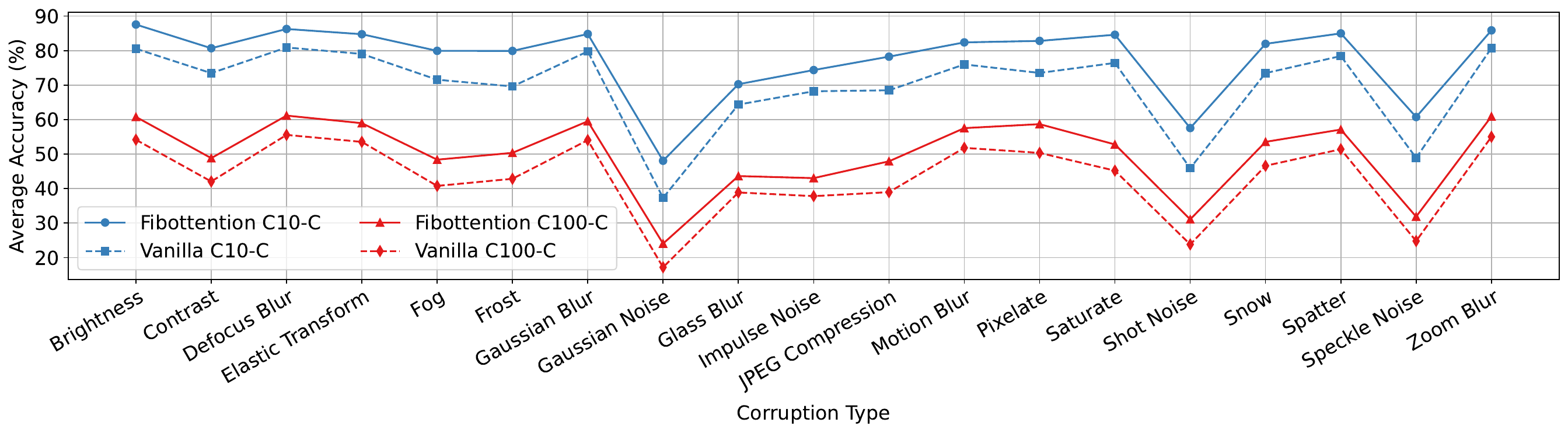}
}{%
\caption{Performance of \modelname~compared to Vanilla ViT on C10 and C100 corrupted datasets.}
\label{fig:corruption_c10_c100}
}
\end{floatrow}
\end{figure}

\begin{figure}[h]
    \centering
    \includegraphics[width=\textwidth]{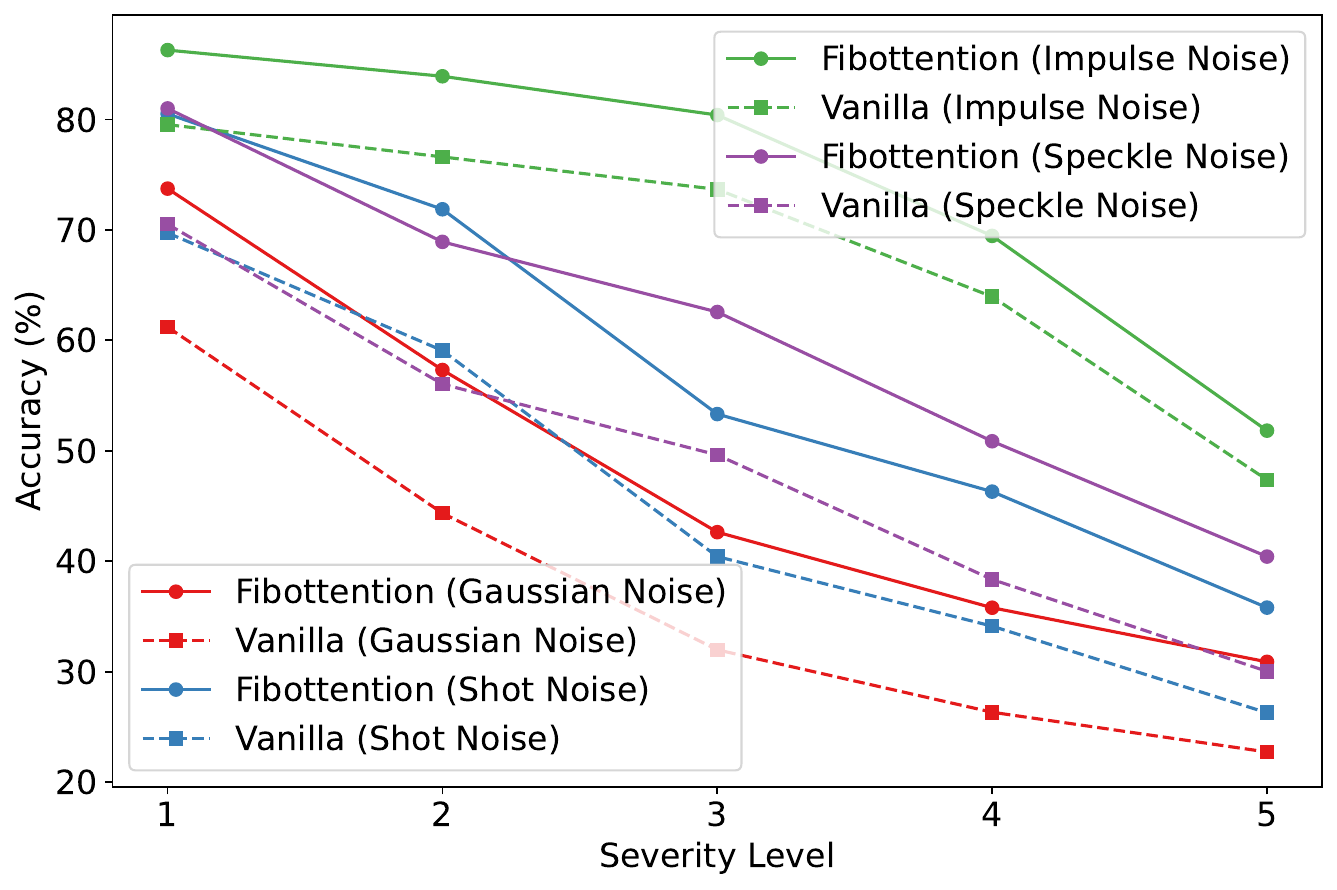}
    \caption{Performance of \modelname ViT-B and Vanilla ViT-B on C10 corrupted dataset under four types of corruption at five levels of severity.}
    \label{fig:corruption_c10}
\end{figure}


\end{document}